\newtheorem{theorem}{Theorem}
\newtheorem{definition}{Definition}
\newtheorem{proposition}{Proposition}
\newtheorem{corollary}{Corollary}
\newtheorem{lemma}{Lemma}
\newtheorem{example}{Example}
\newdefinition{remark}{Remark}
\newproof{proof}{Proof}
\begin{document}
\begin{frontmatter}
\title{Dependence space of matroids and its application to attribute reduction}
\author[addr1]{Aiping Huang}
\author[addr2]{William Zhu\corref{cor1}}\ead{williamfengzhu@gmail.com}

\cortext[cor1]{Corresponding author.}
\address[addr1]{Tan Kah Kee College, \\Xiamen University, Zhangzhou 363105, China}
\address[addr2]{Lab of Granular Computing,\\Minnan Normal University, Zhangzhou 363000, China}

\begin{abstract}
Attribute reduction is a basic issue in knowledge representation and data mining.
Rough sets provide a theoretical foundation for the issue.
Matroids generalized from matrices have been widely used in many fields, particularly greedy algorithm design, which plays an important role in attribute reduction.
Therefore, it is meaningful to combine matroids with rough sets to solve the optimization problems.
In this paper, we introduce an existing algebraic structure called dependence space to study the reduction problem in terms of matroids.
First, a dependence space of matroids is constructed.
Second, the characterizations for the space such as consistent sets and reducts are studied through matroids.
Finally, we investigate matroids by the means of the space and present two expressions for their bases.
In a word, this paper provides new approaches to study attribute reduction.
\end{abstract}

\begin{keyword}
Rough set, Matroid, Dependence space
\end{keyword}

\end{frontmatter}

\section{Introduction}

In many applications, information and knowledge are stored and represented in an information table, where an object is described by a set of attributes.
It is nature for us to fact such a problem that for a special property whether all the attributes in the attribute set are always necessary to preserve this property or not, because using entire attribute set to describe the property is time consuming, and obtained rules may be difficult to understand, to apply and to verify.
In order to solve these problems, attribute reduction is required~\cite{DaiWangTianLiu13Attribute,FanZhu12Attribute,MinHeQianZhu11Test,WeiLiZhang07Knowledge}.
Rough sets proposed by Pawlak~\cite{Pawlak82Rough} may be the most recent one marking significant contributions to deal with the issues of knowledge reduction in the sense of reducing attributes.
It can describe knowledge via set-theoretic analysis based on equivalence classification for the universe set.
On one hand, in order to solve practical problems preferably through rough sets, its axiom systems have been built~\cite{HooshmandaslKarimiAlmbardarDavvaz13Axiomatic,Liu08Axiomatic,Liu13Using} and generalization works have been done~\cite{ZhuWang03Reduction,Zhu07Generalized,Zhu09RelationshipBetween}.

On the other hand, as is known, many optimization issues related to rough sets, including attribute reduction, are NP-hard, and thus, typically require greedy algorithms.
To find effective methods for solving these problems, rough sets have been combined with other theories~\cite{DaiTian13Fuzzy,DzikJarvinenKondo13Representing,WangZhu12Quantitative,WangZhu13Equivalent}, especially matroid theory~\cite{HuangZhu12Geometric,LiLiu12Matroidal,TangSheZhu12Matroidal,WangZhuZhuMin12Matroidal,WangZhu13Four,ZhuWang13Rough}, which borrows extensively from linear algebra and graph theory.
With abundant theories and a perfect system, matroid theory has been widely used in many fields including combinatorial optimization, network flows~\cite{Lawler01Combinatorialoptimization}, and algorithm design, especially greedy algorithm design~\cite{Edmonds71Matroids}.
Therefore, studying rough sets in conjunction with matroids may help solve some optimization issues.
A greedy algorithm is an algorithm that follows the problem solving heuristic of making the locally optimal choice at each stage with the hope of finding a global optimum.

In many problems, a greedy strategy does not in general produce an optimal solution, but nonetheless a greedy heuristic may yield locally optimal solutions that approximate a global optimal solution in a reasonable time.
The most wonderful thing is that the question whether a greedy algorithm produces an optimal solution for a particular problem can be converted to the question whether there exists a translation of the problem into a matroid.
Furthermore, the optimal solutions are bases of the matroid.
There are two well-known optimization problems solved by greedy algorithms designed by matroids.
One is the minimum-weight spanning tree problem, the other is job assignment problem.
The solution for the former problem is a base of the corresponding graphic matroid, and the solution for the latter one is a base of the corresponding transversal matroid.
This prompts us to establish relationships between the bases of a matroid and the attribute reduction.

In this paper, we construct a dependence space in the context of matroids and apply it to attribute reduction problems.
First, a dependence space of matroids is proposed from the viewpoint of closure operator.
Second, we study the dependence space by means of matroids.
It is interesting to find that the set of consistent sets and the set of reducts of the dependence space are the family of independent sets and the family of bases of the corresponding matroid, respectively.
Finally, matroids are studied conversely by dependence spaces and two expressions for bases of matroids are presented.
Therefore, this work provides new viewpoints for studying the issues of attribute reduction.

The rest of this paper is arranged as follows.
Section \ref{S:Preliminaries} reviews some fundamental concepts related to rough sets, information systems, and matroids.
In Section \ref{S:Dependencespaceinducedbymatroids}, we propose a dependence space of matroids.
Section \ref{S:Matroidalapproachtodependencespaces} studies the dependence space in terms of matroids.
The dependence spaces are studied matroids in Section \ref{S:Dependencespaceapproachtomatroids},
while Section \ref{S:conclusions} concludes this paper.

\section{Preliminaries}
\label{S:Preliminaries}

To facilitate our discussion, some fundamental concepts related to rough sets and matroids are reviewed in this section.

\subsection{Rough sets}
Rough sets, based on equivalence relations, provide a systematic approach to data preprocessing
in data mining.
The lower and upper approximation operations, which are two key concepts in the theory, are used to describe objects.

\begin{definition}(Approximation operators~\cite{Pawlak82Rough})
Let $U$ be a finite set, $R$ be an equivalence relation of $U$, and $X \subseteq U$.
Then the lower and upper approximations of $X$, denoted by $R_{\ast}(X)$ and $R^{\ast}(X)$, respectively, are defined as:
\begin{center}
$~~~~R_{\ast}(X) = \{x \in U: \forall~y \in U, xRy \Rightarrow y \in X\} = \{x \in U: [x]_{R} \subseteq X\}$,\\
$R^{\ast}(X) = \{x \in U: \exists~y \in U~s.t.~xRy\} = \{x \in U: [x]_{R} \bigcap X \neq \emptyset\}$,
\end{center}
where $[x]_{R} = \{y \in U : xRy\}$ denotes the equivalence class of $x$ with respect to relation $R$.
\end{definition}

\subsection{Matroids}

Matroid theory borrows extensively from the terminology of linear algebra and graph theory,
largely because it is an abstraction of various notions of central importance in these fields, such as independent sets, bases, and rank functions.
One of the most valuable definitions of matroids is presented in terms of independent sets.

\begin{definition}(Matroid~\cite{Oxley93Matroid})
\label{D:thedefinitionofmatroidfromindependentsets}
A matroid $M$ is an ordered pair $(U,\mathcal{I})$ consisting of a finite set $U$ and a collection $\mathcal{I}$ of subsets of $U$ satisfying
the following three conditions:\\
(I1) $\emptyset \in \mathcal{I}$.\\
(I2) If $I \in \mathcal{I}$ and $I^{'} \subseteq I$, then $I^{'} \in \mathcal{I}$.\\
(I3) If $I_{1},I_{2} \in \mathcal{I}$ and $|I_{1}| < |I_{2}|$,
then there is an element $e \in I_{2} - I_{1}$ such that $I_{1} \bigcup \{e\} \in \mathcal{I}$, where $|X|$ denotes the cardinality of $X$.
\end{definition}

The members of $\mathcal{I}$ are the independent sets of $M$ and $U$ is the ground set of $M$.
We often write $\mathcal{I}(M)$ for $\mathcal{I}$, particularly when several matroids are being considered.
The bases and rank function of a matroid are defined based on independent sets.
To describe these concepts intuitively, certain denotations are presented.

\begin{definition}\cite{Oxley93Matroid}
Let $\mathcal{A}$ be a family of subsets of $U$.
Then we denote\\
$Max(\mathcal{A}) = \{X \in \mathcal{A}: \forall~Y \in \mathcal{A}, if~X \subseteq Y, then ~X = Y\}$,\\
$Min(\mathcal{A}) = \{X \in \mathcal{A}: \forall~Y \in \mathcal{A}, if~Y \subseteq X, then ~X = Y\}$.
\end{definition}

First, the bases of a matroid, generalized from the maximal linearly independent group in vector space, are defined as follows.

\begin{definition}(Base~\cite{Oxley93Matroid})
Let $M = (U, \mathcal{I})$ be a matroid.
Then a subset of $U$ is said to be a base of $M$ if it is maximal in the sense that it is not contained in any other element of $\mathcal{I}$.
If we denote the collection of bases of $M$ by $\mathcal{B}(M)$, $\mathcal{B}(M) = Max(\mathcal{I})$.
\end{definition}

The rank function $r_{M}: 2^{U}\rightarrow \mathbf{N}$ of a matroid, which is a generalization of the rank of a matrix, is defined as $r_{M}(X) = max\{|I|: I \subseteq X, I \in \mathcal{I}\}$ ($X \subseteq U$).
The value $r_{M}(X)$ is called the rank of $X$ in $M$.
Based on the rank function, the closure operator $cl_{M}: 2^{U} \rightarrow 2^{U}$ of a matroid is defined as $cl_{M}(X) = \{x \in U: r_{M}(X) = r_{M}(X \bigcup \{x\})\}$ ($X \subseteq U$).
If $cl_{M}(X) = X$, then $X$ is called a closed set of matroid $M$.
If $X$ is a closed set and $r_{M}(X) = r_{M}(U) - 1$, then $X$ is called a hyperplane of matroid $M$ and the set of all hyperplanes of $M$ is denoted by $\mathcal{H}(M)$.
It is clear that the ground set of $M$ is a closed set of $M$.
The following proposition defines a matroid in terms of closure operators.

\begin{proposition}(Closure axiom~\cite{Oxley93Matroid})
\label{P:closureaxiom}
Let $U$ be a set.
Then the function $cl: 2^{U} \rightarrow 2^{U}$ is the closure operator of a matroid if and only if $cl$ satisfies the following conditions:\\
$(CL1):$ If $X \subseteq U$, then $X \subseteq cl(X)$.\\
$(CL2):$ If $X \subseteq Y \subseteq U$, then $cl(X) \subseteq cl(Y)$.\\
$(CL3):$ If $X \subseteq U$, then $cl(cl(X)) = cl(X)$.\\
$(CL4):$ If $X \subseteq U$, $x \in U$ and $y \in cl(X \bigcup \{x\}) - cl(X)$, then $x \in cl(X \bigcup \{y\})$.
\end{proposition}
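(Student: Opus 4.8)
The statement is a biconditional, so the plan is to prove the two implications separately: first that the closure operator $cl_M$ of any matroid $M=(U,\mathcal{I})$ satisfies (CL1)--(CL4) (necessity), and then that any $cl$ obeying (CL1)--(CL4) arises as $cl_M$ for a suitable matroid (sufficiency).

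For necessity I would work directly from the definition $cl_M(X)=\{x\in U : r_M(X)=r_M(X\cup\{x\})\}$, using the standard properties of the rank function that follow from (I1)--(I3): $r_M$ is monotone, $r_M(X\cup\{x\})\le r_M(X)+1$, and $r_M$ is submodular. (CL1) is immediate since $X\cup\{x\}=X$ for $x\in X$. (CL2) follows from submodularity applied to $Y$ and $X\cup\{x\}$, together with monotonicity. (CL3) reduces to the lemma that adjoining closure elements leaves the rank unchanged, so that $r_M(cl_M(X))=r_M(X)$. The one genuinely computational point is (CL4): writing $r_M(X)=k$, the hypotheses $y\in cl_M(X\cup\{x\})$ and $y\notin cl_M(X)$ force $r_M(X\cup\{y\})=k+1$ and $r_M(X\cup\{x\}\cup\{y\})=r_M(X\cup\{x\})$; combining these with monotonicity pins down $r_M(X\cup\{x\})=r_M(X\cup\{x\}\cup\{y\})=k+1=r_M(X\cup\{y\})$, whence $x\in cl_M(X\cup\{y\})$.

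For sufficiency I would construct the candidate matroid by declaring $\mathcal{I}=\{I\subseteq U : x\notin cl(I\setminus\{x\}) \text{ for every } x\in I\}$, i.e.\ a set is independent exactly when no element lies in the closure of the others. Axiom (I1) holds vacuously, and (I2) is a short consequence of (CL2): if $I'\subseteq I\in\mathcal{I}$ and $x\in I'$, then $cl(I'\setminus\{x\})\subseteq cl(I\setminus\{x\})$, which cannot contain $x$. Once $(U,\mathcal{I})$ is known to be a matroid, I would finish by checking that its closure operator coincides with the given $cl$; the bridge is the fact that a maximal independent subset $B$ of any $X$ satisfies $cl(B)=cl(X)$, so that $x\in cl(X)$ holds precisely when $B\cup\{x\}$ fails to be independent, i.e.\ when $r(X)=r(X\cup\{x\})$.

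The main obstacle is the augmentation axiom (I3), together with the cardinality statement it entails. Given $I_1,I_2\in\mathcal{I}$ with $|I_1|<|I_2|$, the first reduction is to show, using (CL1) and (CL2), that if no $e\in I_2\setminus I_1$ can be added to $I_1$ within $\mathcal{I}$, then every such $e$ already lies in $cl(I_1)$, forcing $I_2\subseteq cl(I_1)$ and hence $cl(I_2)\subseteq cl(I_1)$ by (CL3). Deriving the contradiction then requires a Steinitz-type exchange argument: one must prove that whenever an independent set is contained in the closure of another independent set, the former is no larger than the latter. This is exactly where (CL4) does the essential work, allowing elements to be swapped one at a time while preserving independence, and it is the step I expect to be the most delicate to set up carefully.
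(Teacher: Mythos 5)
First, a point of reference: the paper itself gives no proof of this proposition. It is quoted as a known axiomatization of matroids via closure operators, with a citation to Oxley, so there is no internal argument to compare against; your outline is in effect reconstructing the standard textbook proof. Your necessity direction is correct and essentially complete: the rank-function verifications of $(CL1)$--$(CL3)$ are the standard ones, and your $(CL4)$ computation, forcing $r_{M}(X \cup \{x\}) = r_{M}(X \cup \{y\}) = r_{M}(X \cup \{x\} \cup \{y\}) = k+1$ from the two hypotheses plus monotonicity and unit increase, is exactly right.

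The sufficiency direction, however, has two genuine gaps. The first is a misattribution that conceals a real step: you claim that if no $e \in I_{2} \setminus I_{1}$ can be added to $I_{1}$ within $\mathcal{I}$, then $e \in cl(I_{1})$, ``using (CL1) and (CL2).'' That implication does not follow from $(CL1)$--$(CL2)$ alone. By your definition of $\mathcal{I}$, failure of $I_{1} \cup \{e\} \in \mathcal{I}$ only says that \emph{some} $z \in I_{1} \cup \{e\}$ lies in $cl((I_{1} \cup \{e\}) \setminus \{z\})$, and that witness $z$ need not be $e$. When $z \in I_{1}$, independence of $I_{1}$ gives $z \in cl((I_{1} \setminus \{z\}) \cup \{e\}) - cl(I_{1} \setminus \{z\})$, and it is precisely $(CL4)$ (applied with $X = I_{1} \setminus \{z\}$) that converts this into $e \in cl(I_{1})$. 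The second, more serious, gap is that the crux of $(I3)$ --- that an independent set contained in $cl(I_{1})$ cannot be larger than $I_{1}$ --- is named but not proved. You correctly identify that a Steinitz-type exchange driven by $(CL4)$ is what is needed, but the proposal stops there: no induction is set up, the exchange step is not stated, and preservation of independence along the swaps is not verified. Since essentially all of the work of the harder implication lives in this one lemma (everything else in your sketch is a few lines), the proposal as written is a correct plan rather than a proof. Note also that your final bridging claim, that a maximal independent subset $B$ of $X$ satisfies $cl(B) = cl(X)$, silently relies on the same ``$I \cup \{x\} \notin \mathcal{I}$ implies $x \in cl(I)$'' lemma, so it inherits the first gap as well.
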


Similar to the operation from a vector space to its subspace, the restriction matroid is defined.

\begin{definition}(Restriction matroid~\cite{Oxley93Matroid})
Let $M = (U, \mathcal{I})$ be a matroid.
Then for any $X \subseteq U$, the order pair $(X, \mathcal{I}|X)$ is a matroid, where $\mathcal{I}|X = \{I \subseteq X: I \in \mathcal{I}\}$.
This is called a restriction of $M$ to $X$ and is denoted by $M|X$.
\end{definition}

\section{Dependence space induced by matroids}
\label{S:Dependencespaceinducedbymatroids}
The information about the objects of an information system yielded by different sets of attributes may depend on each other in various ways.
For example, it may turn out that a proper subset of a set of attributes classifies the objects with the same accuracy as the original sets.
Dependence spaces were introduced by Novotn$\acute{y}$ and Pawlak~\cite{NovotnyPawlak91Algebraic} as a general abstract setting for studying such informational dependency.
Matroids provide well-established platforms for greedy algorithms, and just for this reason, matroids arise naturally in a number of problems in combinatorial optimization.
In order to give play to the two theories' respective advantages in attribute reduction, this section constructs a dependence space by matroids firstly.
At the beginning of this section, the concept of dependence space is presented.

\begin{definition}(Dependence space~\cite{NovotnyPawlak91Algebraic})
Let $U$ be a nonempty set and $\Theta$ an equivalence relation of $2^{U}$.
For all $B_{1}, B_{2}, C_{1}, C_{2} \in 2^{U}$,
\begin{center}
    $(B_{1}, C_{1}) \in \Theta, (B_{2}, C_{2}) \in \Theta \Rightarrow (B_{1} \bigcup B_{2}, C_{1} \bigcup C_{2}) \in \Theta$.
\end{center}
Then $\Theta$ is referred to as a congruence relation of $2^{U}$ and the pair $(U, \Theta)$ is called a dependence space.
\end{definition}

The dependence space in compliance with the above definition is a pair containing a nonempty set and a congruence relation.
In other words, through constructing a congruence relation, one can obtain a dependence space.

\begin{definition}
\label{D:arelationinducedbymatroids}
Let $M$ be a matroid of $U$.
Then one can define a relation of $2^{U}$ as follows:
For all $X, Y \in 2^{U}$,
\begin{center}
    $(X, Y) \in \Theta_{M} \Leftrightarrow cl_{M}(X) = cl_{M}(Y)$.
\end{center}
\end{definition}

It is clear that the relation is an equivalence relation.

\begin{example}
\label{E:example1}
Let $M = (U, \mathcal{I})$ be a matroid, where $U = \{1, 2, 3\}$ and $\mathcal{I} = \{ \emptyset, \{1\}, \{2\}, $ $\{3\},$ $ \{1, 2\},$ $ \{2, 3\}\}$.
Then utilizing the definition of closure, we have $cl_{M}(\emptyset) = \emptyset$, $cl_{M}(\{2\}) = \{2\}$, $cl_{M}(\{1\}) = cl_{M}(\{3\}) = cl_{M}(\{1, 3\}) = \{1, 3\}$ and $cl_{M}(\{1, 2\}) = cl_{M}(\{2, 3\}) = cl_{M}(U) = U$.
Thus, $\Theta_{M} = \{(\emptyset, \emptyset),$ $(\{1\}, \{1\}),$ $(\{1\}, \{3\}),$ $(\{3\}, \{1\}),$ $(\{1\}, \{1, 3\}),$ $ (\{\{3\}, \{3\}),$ $(\{3\}, \{1, 3\}),$ $ (\{1, 3\}, \{1\}),$ $(\{1, 3\}, \{1, 3\}),$ $(\{1, 3\}, \{3\}),$ $(\{2\}, \{2\}),$ $(\{1, 2\}, \{1, 2\}),$ $(\{1, 2\}, \{2, 3\}),$ $(\{1, 2\}, U),$ $(\{2, 3\}, \{1, 2\}),$ $(\{2, 3\}, \{2, 3\}),$ $(\{2, 3\}, U),$ $(U, \{1, 2\}),$ $(U, \{2, 3\}),$ $(U, U)\}$.
\end{example}

In order to verify whether the relation is a congruence relation or not, we study certain properties of closed sets in matroids first.

\begin{lemma}\cite{Oxley93Matroid}
\label{L:elementoflatticeexpressedbyhyperplanes}
Let $X$ be a closed set of a matroid $M$ which ground set is $U$ and suppose that $r_{M}(X) = r_{M}(U) - k$ where $k \geq 1$.
Then $M$ has a set $\{H_{1}, H_{2}, \cdots, H_{k}\}$ of hyperplanes such that $X = \bigcap_{i = 1}^{k} H_{i}$.
\end{lemma}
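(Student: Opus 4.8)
The plan is to prove the statement by induction on the corank $k = r_M(U) - r_M(X)$, reducing everything to two elementary facts about closed sets together with one key construction that, given a closed set and a point outside it, produces a hyperplane separating them.

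First I would record two preliminary facts. (i) Any intersection of closed sets is closed: if each $F_\alpha$ is closed, then $F = \bigcap_\alpha F_\alpha \subseteq F_\alpha$ gives $cl_M(F) \subseteq cl_M(F_\alpha) = F_\alpha$ by (CL2), so $cl_M(F) \subseteq F$, and (CL1) forces equality. (ii) If $F' \subsetneq F$ are closed sets, then $r_M(F') < r_M(F)$: otherwise a basis of $F'$ would already be a basis of $F$, so $cl_M(F') = cl_M(F)$ and hence $F' = F$, a contradiction. Both follow directly from the definitions and Proposition \ref{P:closureaxiom}.

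The engine of the argument is the separation claim: if $X$ is closed with $r_M(X) < r_M(U)$ and $y \notin X$, then there is a hyperplane $H$ with $X \subseteq H$ and $y \notin H$. To build $H$ I would take a basis $B_X$ of $X$; since $y \notin cl_M(X)$ the set $B_X \cup \{y\}$ is independent, and I extend it to a basis $B$ of $U$. Then $B \setminus \{y\}$ is independent of size $r_M(U) - 1$, so $H := cl_M(B \setminus \{y\})$ is a hyperplane; it contains $X = cl_M(B_X)$ by (CL2), and it excludes $y$ because readjoining $y$ recovers the spanning set $B$ and strictly raises the rank. With this I run the induction. For $k = 1$ the set $X$ is itself a hyperplane, so $\{X\}$ works. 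For $k \geq 2$, pick $e \notin X$ (possible since $r_M(X) < r_M(U)$) and set $F = cl_M(B_X \cup \{e\})$; then $X \subsetneq F$ and $r_M(F) = r_M(U) - (k-1)$, so the induction hypothesis writes $F = H_1 \cap \cdots \cap H_{k-1}$. Choosing $y \in F \setminus X$ and applying the separation claim yields a hyperplane $H_k \supseteq X$ with $y \notin H_k$, whence $X \subseteq F \cap H_k \subsetneq F$. By (ii) and monotonicity of $r_M$ the rank of $F \cap H_k$ is squeezed between $r_M(X)$ and $r_M(F) - 1 = r_M(X)$, so $F \cap H_k$ and $X$ are closed sets of equal rank with $X \subseteq F \cap H_k$; hence $X = F \cap H_k = H_1 \cap \cdots \cap H_k$.

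The step I expect to be the main obstacle is controlling the count: exhibiting $X$ as an intersection of hyperplanes is easy (intersect all hyperplanes containing it), but obtaining exactly $k$ of them requires that each newly added hyperplane drop the rank of the running intersection by precisely one. This is exactly what the separation claim together with facts (i) and (ii) delivers, so the care of the proof lies in the basis-extension construction and the accompanying rank bookkeeping rather than in any deeper structural argument.
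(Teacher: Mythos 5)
The paper offers no proof of this lemma at all: it is stated as a known result, cited directly from Oxley \cite{Oxley93Matroid}, and used only as a black box feeding Corollary \ref{C:aclosuresetexpressedbyhyperplane} and Proposition \ref{P:theotherclosureexpressionbyhyperplane}. So there is no in-paper argument to compare yours against; what you have supplied is a self-contained proof of the cited fact, and it is correct. Your two preliminary facts hold, though fact (ii) silently uses that a basis of a flat has the same closure as the flat --- this is precisely the paper's Lemma \ref{L:therelationshipbetweentheclosedsetofanybaseofasetandtheset}, itself cited from Oxley without proof, so you should either prove it (it follows from $r_{M}(cl_{M}(A)) = r_{M}(A)$ and submodularity-free rank arguments) or cite it explicitly. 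The separation claim is the heart of the matter and your construction is sound: $B_{X} \cup \{y\}$ is independent because $y \notin X = cl_{M}(B_{X})$, extending it to a basis $B$ of $M$ and taking $H = cl_{M}(B - \{y\})$ gives a closed set of rank $r_{M}(U) - 1$ that contains $X$ and misses $y$. The induction bookkeeping also checks out: $F = cl_{M}(B_{X} \cup \{e\})$ is a closed set of corank $k-1$ properly containing $X$, and $F \cap H_{k}$ is closed (by fact (i)) with rank pinched between $r_{M}(X)$ and $r_{M}(F) - 1 = r_{M}(X)$, so fact (ii) forces $X = F \cap H_{k} = H_{1} \cap \cdots \cap H_{k}$. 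One point worth making explicit if the lemma is read as asserting $k$ distinct hyperplanes: your construction does deliver distinctness, since $y$ lies in every $H_{i}$ with $i < k$ (because $y \in F$) but not in $H_{k}$, and $H_{1}, \ldots, H_{k-1}$ are distinct by the induction hypothesis.
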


Based on the result, we find that any closed set whose rank is less than that of ground set can be expressed by the intersection of some hyperplanes containing the set.

\begin{corollary}
\label{C:aclosuresetexpressedbyhyperplane}
Let $X$ be a closed set of a matroid $M$ which ground set is $U$ and suppose that $r_{M}(X) < r_{M}(U)$.
Then $X = \bigcap \{H \in \mathcal{H}(M): X \subseteq H\}$.
\end{corollary}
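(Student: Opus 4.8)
The plan is to derive the corollary directly from Lemma \ref{L:elementoflatticeexpressedbyhyperplanes} by a double inclusion, with essentially no extra machinery. First I would invoke the hypothesis $r_{M}(X) < r_{M}(U)$ to write $r_{M}(X) = r_{M}(U) - k$ for some integer $k \geq 1$; this is exactly the setting of the lemma, which then supplies hyperplanes $H_{1}, \dots, H_{k} \in \mathcal{H}(M)$ with $X = \bigcap_{i=1}^{k} H_{i}$. Throughout, write $\mathcal{H}_{X} = \{H \in \mathcal{H}(M) : X \subseteq H\}$ for the family appearing on the right-hand side of the claimed identity.

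For the inclusion $X \subseteq \bigcap \mathcal{H}_{X}$, I would observe that by the very definition of $\mathcal{H}_{X}$ every member $H$ of this family satisfies $X \subseteq H$, so $X$ is contained in the common intersection of all of them. This direction is immediate and needs no appeal to the lemma.

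For the reverse inclusion $\bigcap \mathcal{H}_{X} \subseteq X$, the key observation is that each hyperplane $H_{i}$ produced by the lemma already belongs to $\mathcal{H}_{X}$: since $X = \bigcap_{j=1}^{k} H_{j} \subseteq H_{i}$, we have $X \subseteq H_{i}$, whence $H_{i} \in \mathcal{H}_{X}$. Thus $\{H_{1}, \dots, H_{k}\}$ is a subfamily of $\mathcal{H}_{X}$, and since intersecting over a larger index family can only shrink the result, $\bigcap \mathcal{H}_{X} \subseteq \bigcap_{i=1}^{k} H_{i} = X$. Combining the two inclusions yields $X = \bigcap \mathcal{H}_{X}$, which is the assertion.

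I do not expect a substantive obstacle here; the content of the corollary is really just the recognition that the particular hyperplanes furnished by the lemma are themselves members of the canonical family $\mathcal{H}_{X}$. The only point deserving care is the role of the hypothesis $r_{M}(X) < r_{M}(U)$, which guarantees $k \geq 1$ so that the lemma is applicable and the family $\mathcal{H}_{X}$ is nonempty; without it the intersection on the right could be taken over an empty family (namely when $X = U$), and the statement would require the usual convention that an empty intersection equals $U$.
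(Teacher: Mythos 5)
Your proof is correct and follows essentially the same route as the paper's: the easy inclusion $X \subseteq \bigcap\{H \in \mathcal{H}(M): X \subseteq H\}$ from the definition, and the reverse inclusion by noting that the hyperplanes $H_{1}, \dots, H_{k}$ supplied by Lemma \ref{L:elementoflatticeexpressedbyhyperplanes} all belong to the family $\{H \in \mathcal{H}(M): X \subseteq H\}$, so the full intersection is contained in $\bigcap_{i=1}^{k} H_{i} = X$. Your closing remark on why $r_{M}(X) < r_{M}(U)$ is needed is a useful clarification the paper leaves implicit.
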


\begin{proof}
It is obvious that $X \subseteq \bigcap\{H \in \mathcal{H}(M): X \subseteq H\}$.
Let us assume that $X = \bigcap_{i = 1}^{k} H_{i}$, utilizing Lemma \ref{L:elementoflatticeexpressedbyhyperplanes}, we have $\{H_{i} \in \mathcal{H}(M): i = 1, 2, \cdots, k\} \subseteq \{H \in \mathcal{H}(M): X \subseteq H\}$, thus $X \subseteq \bigcap\{H \in \mathcal{H}(M): X \subseteq H\} \subseteq \bigcap_{i = 1}^{k} H_{i} = X$.
Therefore, $X = \bigcap \{H \in \mathcal{H}(M): X \subseteq H\}$.
\end{proof}

In fact, the result can be generalized to the closure of any subset.
In other words, the closures of subsets can be characterized by hyperplanes.

\begin{proposition}
\label{P:theotherclosureexpressionbyhyperplane}
Let $M$ be a matroid of $U$.
Then for all $X \subseteq U$,
\begin{align}
cl_{M}(X) = \left\{\begin{aligned}%
&U && \mbox r_{M}(X) = r_{M}(U),\\
&\bigcap \{H \in \mathcal{H}(M): X \subseteq H\} && \mbox r_{M}(X) \neq r_{M}(U).\\
\end{aligned}\right.
\end{align}
\end{proposition}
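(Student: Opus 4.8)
The plan is to split into the two cases appearing in the statement and treat the nontrivial one by reducing to Corollary~\ref{C:aclosuresetexpressedbyhyperplane}. The case $r_{M}(X) = r_{M}(U)$ is immediate from the definitions: for any $x \in U$ we have $r_{M}(X) \leq r_{M}(X \cup \{x\}) \leq r_{M}(U) = r_{M}(X)$, so $r_{M}(X \cup \{x\}) = r_{M}(X)$, and hence $x \in cl_{M}(X)$ by the definition of the closure operator. Since $x$ was arbitrary, $cl_{M}(X) = U$.

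For the case $r_{M}(X) \neq r_{M}(U)$, that is $r_{M}(X) < r_{M}(U)$, I would pass to the closed set $Y := cl_{M}(X)$. By $(CL3)$ the set $Y$ is closed, and $r_{M}(Y) = r_{M}(X) < r_{M}(U)$, so Corollary~\ref{C:aclosuresetexpressedbyhyperplane} applies to $Y$ and gives $Y = \bigcap \{H \in \mathcal{H}(M): Y \subseteq H\}$. It then remains only to check that the hyperplanes containing $Y$ are exactly those containing $X$. One inclusion follows from $X \subseteq Y$, which is $(CL1)$: any hyperplane containing $Y$ contains $X$. For the reverse, if $H$ is a hyperplane with $X \subseteq H$, then $H$ is closed, so monotonicity $(CL2)$ yields $Y = cl_{M}(X) \subseteq cl_{M}(H) = H$. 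Therefore the two families of hyperplanes coincide and $cl_{M}(X) = Y = \bigcap \{H \in \mathcal{H}(M): X \subseteq H\}$.

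The main obstacle I anticipate is justifying the rank identity $r_{M}(cl_{M}(X)) = r_{M}(X)$, which is needed to verify the hypothesis $r_{M}(Y) < r_{M}(U)$ of the corollary; although standard, it is not isolated earlier in the excerpt, so I would prove it directly from the independent-set axioms. Fixing a maximal independent $I \subseteq X$ with $|I| = r_{M}(X)$, suppose some independent $J \subseteq cl_{M}(X)$ had $|J| > |I|$; then $(I3)$ would supply an element $e \in J - I$ with $I \cup \{e\} \in \mathcal{I}$, whence $r_{M}(X \cup \{e\}) \geq |I| + 1 > r_{M}(X)$, contradicting $e \in cl_{M}(X)$. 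Once this identity is available, the remaining steps are routine manipulations with the closure axioms.
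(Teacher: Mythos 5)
Your proposal is correct and follows essentially the same route as the paper: the full-rank case is handled by the identical rank squeeze, and the deficient-rank case is reduced to Corollary~\ref{C:aclosuresetexpressedbyhyperplane} applied to the closed set $cl_{M}(X)$, followed by the same two-inclusion argument showing that the hyperplanes containing $cl_{M}(X)$ are exactly those containing $X$. The only difference is that you explicitly derive the rank identity $r_{M}(cl_{M}(X)) = r_{M}(X)$ from axiom (I3), whereas the paper invokes it as a standard fact; your derivation of it is valid.
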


\begin{proof}
First, we prove $cl_{M}(X) = U$ when $r_{M}(X) = r_{M}(U)$.
For all $x \in U$, $r_{M}(X) \leq r_{M}(X \bigcup \{x\}) \leq r_{M}(U) = r_{M}(X)$ which implies $r_{M}(X) = r_{M}(X \bigcup \{x\})$, i.e., $x \in cl_{M}(X)$.
Hence $U \subseteq cl_{M}(X)$.
Combining with $cl_{M}(X) \subseteq U$, we have proved the result.
For all $X \subseteq U$, $r_{M}(cl_{M}(X)) = r_{M}(X) \leq r_{M}(U)$.
Then $r_{M}(X) \neq r_{M}(U)$ implies $r_{M}(cl_{M}(X)) < r_{M}(U)$.
Utilizing Corollary \ref{C:aclosuresetexpressedbyhyperplane}, we know $\{H\in \mathcal{H}(M): cl_{M}(X) \subseteq H\} \neq \emptyset$.
It is clear that $\{H \in \mathcal{H}(M): cl_{M}(X) \subseteq H\} \subseteq \{H \in \mathcal{H}(M): X \subseteq H\}$ because $X \subseteq cl_{M}(X)$.
For all $H \in \{H \in \mathcal{H}(M): X \subseteq H\}$, $cl_{M}(X) \subseteq cl_{M}(H) = H$.
Thus $\{H \in \mathcal{H}(M): X \subseteq H\} \subseteq \{H\in \mathcal{H}(M): cl_{M}(X) \subseteq H\}$.
Therefore, $\{H \in \mathcal{H}(M): X \subseteq H\} = \{H \in \mathcal{H}(M): cl_{M}(X) \subseteq H\}$, i.e., $cl_{M}(X) = \bigcap \{H \in \mathcal{H}(M): cl_{M}(X) \subseteq H\} = \bigcap \{H \in \mathcal{H}(M): X \subseteq H\}$.
\end{proof}

Therefore, the properties of the closures of subsets can be equivalently described by hyperplanes.

\begin{proposition}
\label{P:ancharacterizesofclosedsets}
Let $M$ be a matroid of $U$ and $X, Y \subseteq U$.
Then $cl_{M}(X) \subseteq cl_{M}(Y) \Leftrightarrow \forall H \in \mathcal{H}(M)(Y \subseteq H \rightarrow X \subseteq H)$.
\end{proposition}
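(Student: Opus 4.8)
The plan is to prove the biconditional by reducing both sides to the hyperplane description of closures furnished by Proposition~\ref{P:theotherclosureexpressionbyhyperplane}. The central observation is that for a hyperplane $H$, the condition $X \subseteq H$ is equivalent to $cl_M(X) \subseteq H$, since $H$ is itself a closed set ($cl_M(H) = H$) and the closure operator is monotone (CL2) with $X \subseteq cl_M(X)$ (CL1). Thus the right-hand side $\forall H \in \mathcal{H}(M)(Y \subseteq H \rightarrow X \subseteq H)$ can be rewritten as the inclusion of hyperplane families
\begin{center}
$\{H \in \mathcal{H}(M): Y \subseteq H\} \subseteq \{H \in \mathcal{H}(M): X \subseteq H\}$,
\end{center}
which is the natural bridge between the set-theoretic containment of closures and the combinatorial condition on hyperplanes.

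For the forward direction, I would assume $cl_M(X) \subseteq cl_M(Y)$ and take any hyperplane $H$ with $Y \subseteq H$. Then $cl_M(Y) \subseteq cl_M(H) = H$, so $cl_M(X) \subseteq cl_M(Y) \subseteq H$, whence $X \subseteq H$. This direction is straightforward and uses only monotonicity, idempotence, and the fact that hyperplanes are closed. For the converse, I would assume the hyperplane inclusion above and aim to derive $cl_M(X) \subseteq cl_M(Y)$, splitting into cases according to Proposition~\ref{P:theotherclosureexpressionbyhyperplane}. If $r_M(Y) = r_M(U)$, then $cl_M(Y) = U$ and the inclusion is trivial. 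If $r_M(Y) \neq r_M(U)$, then Proposition~\ref{P:theotherclosureexpressionbyhyperplane} gives $cl_M(Y) = \bigcap\{H \in \mathcal{H}(M): Y \subseteq H\}$; the hyperplane inclusion then forces $\bigcap\{H: X \subseteq H\} \subseteq \bigcap\{H: Y \subseteq H\}$ (intersecting over a smaller family yields a larger set, so intersecting over the larger $X$-family lands inside the $Y$-intersection), and since $cl_M(X) \subseteq \bigcap\{H: X \subseteq H\}$ by CL1 and CL2, we obtain $cl_M(X) \subseteq cl_M(Y)$.

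The main obstacle is the converse when $r_M(X) = r_M(U)$ but $r_M(Y) \neq r_M(U)$. In that case $cl_M(X) = U$, so the claimed inclusion $cl_M(X) \subseteq cl_M(Y)$ would force $cl_M(Y) = U$, contradicting $r_M(Y) \neq r_M(U)$. The resolution is that this case cannot actually occur under the hypothesis: if $r_M(X) = r_M(U)$, then $X$ is contained in no proper closed set and hence in no hyperplane, so $\{H \in \mathcal{H}(M): X \subseteq H\} = \emptyset$; the assumed inclusion then forces $\{H \in \mathcal{H}(M): Y \subseteq H\} = \emptyset$ as well, which by the contrapositive of Corollary~\ref{C:aclosuresetexpressedbyhyperplane} (or directly by Proposition~\ref{P:theotherclosureexpressionbyhyperplane}) means $r_M(Y) = r_M(U)$ too, returning us to the trivial case. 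I would therefore handle the case analysis carefully to confirm that the nontrivial subcase is exactly $r_M(X) \neq r_M(U)$ and $r_M(Y) \neq r_M(U)$, where the clean intersection argument applies, and that all other configurations are either trivial or excluded by the hypothesis.
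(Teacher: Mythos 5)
Your proof is correct and takes essentially the same route as the paper's: the forward direction uses monotonicity together with the fact that hyperplanes are closed, and the converse reduces to the hyperplane-intersection expression of Proposition~\ref{P:theotherclosureexpressionbyhyperplane} with the same case split on whether $Y$ lies in some hyperplane (equivalently, whether $r_{M}(Y) = r_{M}(U)$, i.e., $cl_{M}(Y) = U$). Your explicit handling of the degenerate case $\{H \in \mathcal{H}(M): X \subseteq H\} = \emptyset$ is a point the paper's converse passes over silently, but it is a refinement of the same argument rather than a different approach.
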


\begin{proof}
(``$\Rightarrow$"): If $cl_{M}(Y) = U$, then for all $H \in \mathcal{H}(M)$, $Y \nsubseteq H$.
Thus we have the result.
If $cl_{M}(Y) \neq U$, then $\{H \in \mathcal{H}(M): Y \subseteq H\} \neq \emptyset$.
Since $Y \subseteq H$, $X \subseteq cl_{M}(X) \subseteq cl_{M}(Y) \subseteq cl_{M}(H) = H$.
(``$\Leftarrow$"): If for all $H \in \mathcal{H}(M)$, $Y \nsubseteq H$, then $cl_{M}(Y) = U$.
Otherwise, $r_{M}(Y) \neq r_{M}(U)$ holds.
Utilizing Proposition \ref{P:theotherclosureexpressionbyhyperplane}, there exists $H \in \mathcal{H}(M)$ such that $Y \subseteq H$, which implies a contradiction.
Thus $cl_{M}(X) \subseteq U = cl_{M}(Y)$.
For all $H \in \mathcal{H}(M)$, if $Y \subseteq H$, then by assumption we have $\{H \in \mathcal{H}(M): Y \subseteq H\} \subseteq \{H \in \mathcal{H}(M): X \subseteq H\}$, which implies that $cl_{M}(X) \subseteq cl_{M}(Y)$.
\end{proof}

Based on the result, we find that the equivalence relation defined by Definition \ref{D:arelationinducedbymatroids} is a congruence relation.
Therefore, a dependence space of matroids is constructed.

\begin{theorem}
\label{T:thedependencespaceinducedbymatroid}
Let $M$ be a matroid of $U$.
Then $(U, \Theta_{M})$ is a dependence space.
\end{theorem}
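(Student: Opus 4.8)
The plan is to verify the two defining requirements of a dependence space separately. Since $\Theta_{M}$ is defined by the equality $cl_{M}(X) = cl_{M}(Y)$, reflexivity, symmetry, and transitivity follow immediately from the corresponding properties of set equality, so $\Theta_{M}$ is an equivalence relation on $2^{U}$, as already observed after Definition \ref{D:arelationinducedbymatroids}. The substance of the theorem is therefore the congruence property: given $(B_{1}, C_{1}) \in \Theta_{M}$ and $(B_{2}, C_{2}) \in \Theta_{M}$, I must show $(B_{1} \cup B_{2}, C_{1} \cup C_{2}) \in \Theta_{M}$, that is, $cl_{M}(B_{1} \cup B_{2}) = cl_{M}(C_{1} \cup C_{2})$.

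First I would establish the inclusion $cl_{M}(B_{1} \cup B_{2}) \subseteq cl_{M}(C_{1} \cup C_{2})$. By hypothesis $cl_{M}(B_{1}) = cl_{M}(C_{1})$ and $cl_{M}(B_{2}) = cl_{M}(C_{2})$. Using $(CL1)$ and $(CL2)$, I get $B_{1} \subseteq cl_{M}(B_{1}) = cl_{M}(C_{1}) \subseteq cl_{M}(C_{1} \cup C_{2})$ and likewise $B_{2} \subseteq cl_{M}(C_{1} \cup C_{2})$, hence $B_{1} \cup B_{2} \subseteq cl_{M}(C_{1} \cup C_{2})$. Applying $(CL2)$ once more and then $(CL3)$ yields $cl_{M}(B_{1} \cup B_{2}) \subseteq cl_{M}(cl_{M}(C_{1} \cup C_{2})) = cl_{M}(C_{1} \cup C_{2})$. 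The reverse inclusion is obtained by exchanging the roles of the $B_{i}$ and the $C_{i}$, which is legitimate because the hypotheses are symmetric in these pairs. Combining the two inclusions gives the desired equality and completes the congruence check.

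Alternatively, I could route the same argument through the hyperplane characterization of Proposition \ref{P:ancharacterizesofclosedsets}, which is presumably the intended path given the preceding development. To show $cl_{M}(B_{1} \cup B_{2}) \subseteq cl_{M}(C_{1} \cup C_{2})$ it then suffices to check that every $H \in \mathcal{H}(M)$ containing $C_{1} \cup C_{2}$ also contains $B_{1} \cup B_{2}$; such an $H$ contains both $C_{1}$ and $C_{2}$, and the equalities $cl_{M}(B_{i}) = cl_{M}(C_{i})$ together with Proposition \ref{P:ancharacterizesofclosedsets} force $B_{1} \subseteq H$ and $B_{2} \subseteq H$, whence $B_{1} \cup B_{2} \subseteq H$.

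The argument is short, so I do not expect a genuine obstacle; the only point that requires care is recognizing that the congruence condition is exactly the union-compatibility above, and that it is the idempotence $(CL3)$ of the closure operator that allows one to pass from the elementwise containment $B_{1} \cup B_{2} \subseteq cl_{M}(C_{1} \cup C_{2})$ back to a containment of closures. I would make a point of invoking $(CL1)$, $(CL2)$, and $(CL3)$ explicitly, rather than the full closure axiom, so that the proof rests only on the minimal structure actually needed.
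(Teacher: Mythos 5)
Your proof is correct, but it takes a genuinely different route from the paper's. The paper proves the congruence property by passing through hyperplanes: it first establishes that closures are intersections of hyperplanes (Proposition \ref{P:theotherclosureexpressionbyhyperplane}) and that closure inclusion is equivalent to a hyperplane-containment condition (Proposition \ref{P:ancharacterizesofclosedsets}), and then argues by cases — one case where some closure equals $U$, handled by monotonicity, and one case where all closures are proper, handled by intersecting the matching hyperplane families. Your primary argument bypasses all of this: from $cl_{M}(B_{i}) = cl_{M}(C_{i})$ you get $B_{1} \cup B_{2} \subseteq cl_{M}(C_{1} \cup C_{2})$ via $(CL1)$ and $(CL2)$, then apply $(CL2)$ and idempotence $(CL3)$ to conclude $cl_{M}(B_{1} \cup B_{2}) \subseteq cl_{M}(C_{1} \cup C_{2})$, with the reverse inclusion by symmetry. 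This is shorter, requires no case split, and is strictly more general: it shows that \emph{any} operator satisfying $(CL1)$--$(CL3)$ — not just a matroid closure, since $(CL4)$ is never used — induces a congruence of $2^{U}$ under this construction. What the paper's heavier route buys is that the hyperplane machinery it develops (Propositions \ref{P:theotherclosureexpressionbyhyperplane} and \ref{P:ancharacterizesofclosedsets}) is needed again later, in particular for the dense-family result (Proposition \ref{P:thedensefamilyofthedependencespaceinducedbymatroid}), so the paper's proof doubles as an illustration of tools it must build anyway; your secondary, hyperplane-based sketch is essentially a streamlined version of that same argument, with Proposition \ref{P:ancharacterizesofclosedsets} absorbing the case analysis. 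Both of your arguments are complete and valid.
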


\begin{proof}
Let us assume that $A_{1}, A_{2}, B_{1}, B_{2} \subseteq U$ satisfy $(A_{1}, A_{2}) \in \Theta_{M}$ and $(B_{1}, B_{2})$ $\in \Theta_{M}$.
Then $cl_{M}(A_{1})$ $ = cl_{M}(A_{2})$ and $cl_{M}(B_{1}) = cl_{M}(B_{2})$.
Now we need to prove $(A_{1} \bigcup B_{1},$ $A_{2} \bigcup B_{2}) \in \Theta_{M}$, i.e., $cl_{M}(A_{1} \bigcup B_{1}) = cl_{M}(A_{2} \bigcup B_{2})$.
Case 1: $cl_{M}(A_{1}) $ $ = cl_{M}(A_{2}) = U$ or $cl_{M}(B_{1}) = cl_{M}(B_{2}) = U$.
We may as well suppose $cl_{M}(A_{1}) = cl_{M}(A_{2}) = U$, then $U = cl_{M}(A_{1}) \subseteq cl_{M}(A_{1} \bigcup B_{1}) \subseteq cl_{M}(U) = U$ and $U = cl_{M}(A_{2}) \subseteq cl_{M}(A_{2} \bigcup B_{2}) \subseteq cl_{M}(U) = U$, i.e., $cl_{M}(A_{1} \bigcup B_{1}) = U = cl_{M}(A_{2} \bigcup B_{2})$.
Case 2: $cl_{M}(A_{1}) = cl_{M}(A_{2}) \neq U$ and $cl_{M}(B_{1}) = cl_{M}(B_{2}) \neq U$.
According to Proposition \ref{P:ancharacterizesofclosedsets}, we have $\{H \in \mathcal{H}(M): A_{2} \subseteq H\} = \{H \in \mathcal{H}(M): A_{1} \subseteq H\}$ and $\{H \in \mathcal{H}(M): B_{1} \subseteq H\} = \{H \in \mathcal{H}(M): B_{2} \subseteq H\}$.
If $cl_{M}(A_{1} \bigcup B_{1}) \neq U$, then $cl_{M}(A_{1} \bigcup B_{1}) = \bigcap \{H \in \mathcal{H}(M): A_{1} \bigcup B_{1} \subseteq H\} = \bigcap \{\{H \in \mathcal{H}(M): A_{1} \subseteq H\} \bigcap \{H \in \mathcal{H}(M): B_{1} \subseteq H\}\} = \bigcap \{\{H \in \mathcal{H}(M): A_{2} \subseteq H\} \bigcap \{H \in \mathcal{H}(M): B_{2} \subseteq H\}\} = \bigcap \{H \in \mathcal{H}(M): A_{2} \bigcup B_{2} \subseteq H\} = cl_{M}(A_{2} \bigcup B_{2})$.
If $cl_{M}(A_{1} \bigcup B_{1}) = U$, then we claim $cl_{M}(A_{2} \bigcup B_{2}) = U$.
If $cl_{M}(A_{2} \bigcup B_{2}) \neq U$, then $cl_{M}(A_{1} \bigcup B_{1}) = cl_{M}(A_{2} \bigcup B_{2}) \neq U$, which implies a contradiction.
Therefore, $\Theta_{M}$ is a congruence relation, i.e., $(U, \Theta_{M})$ is a dependence space.
\end{proof}

\section{Matroidal approach to dependence space}
\label{S:Matroidalapproachtodependencespaces}
In matroid theory, there are many greedy algorithms designed using bases of matroids.
In practice, the local optimal solutions obtained using these algorithms in matroidal structures are often global ones.
As is known, the algorithms for the issues of attribute reduction are almost greedy one.
These promote us to establish relationships between the bases of matroids and the reducts of dependence spaces.
In Section \ref{S:Dependencespaceinducedbymatroids}, we have constructed a dependence space by matroids.
In this section, we study the reduction problems of the dependence space.
It is interesting that the set of redusts of the dependence space is the set of bases of the corresponding matroid.
First, we present the concept of consistent sets.

\begin{definition}(Consistent set~\cite{NovotnyPawlak91Algebraic})
Let $(U, \Theta)$ be a dependence space.
A subset $X$ of $U$ is consistent in $(U, \Theta)$ if $X$ is minimal with respect to the inclusion relation in its $\Theta-$class.
Otherwise it is inconsistent.
The collection of consistent sets of $(U, \Theta)$ is denoted by $IND_{\Theta}$.
\end{definition}

In fact, the consistent sets of the dependence space induced by matroids can be equivalently characterized by other words.

\begin{proposition}
\label{P:anequivalencecharacterizationforindependentsetindependencespace}
Let $M$ be a matroid of $U$ and $X \subseteq U$.
$X \in IND_{\Theta_{M}}$ if and only if $(X, X - \{x\}) \notin \Theta_{M}$ for all $x \in X$.
\end{proposition}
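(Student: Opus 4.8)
The plan is to prove the biconditional $X \in IND_{\Theta_M} \Leftrightarrow (X, X-\{x\}) \notin \Theta_M$ for all $x \in X$ by unraveling the definition of a consistent set and translating it into the language of the closure operator via Definition \ref{D:arelationinducedbymatroids}. Recall that $X$ is consistent precisely when $X$ is minimal in its $\Theta_M$-class, and that $(Y,Z) \in \Theta_M$ means exactly $cl_M(Y) = cl_M(Z)$. So the whole statement should reduce to comparing $cl_M(X)$ with the closures of the proper subsets of $X$, and in particular the immediate predecessors $X - \{x\}$.

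For the forward direction, I would argue the contrapositive: suppose some $x \in X$ satisfies $(X, X-\{x\}) \in \Theta_M$, i.e. $cl_M(X-\{x\}) = cl_M(X)$. Then $X - \{x\}$ lies in the same $\Theta_M$-class as $X$ and is a proper subset of $X$, so $X$ is not minimal in its class, hence $X \notin IND_{\Theta_M}$. This direction is essentially immediate from the definitions.

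The reverse direction is where the real work lies, and I expect it to be the main obstacle. Assume $(X, X-\{x\}) \notin \Theta_M$ for every $x \in X$; I must show $X$ is minimal in its $\Theta_M$-class, i.e. no proper subset $Y \subsetneq X$ satisfies $cl_M(Y) = cl_M(X)$. The difficulty is that the hypothesis only controls the single-element deletions $X - \{x\}$, whereas minimality requires ruling out all proper subsets $Y$. The key idea is to use the monotonicity of the closure operator (CL2 from Proposition \ref{P:closureaxiom}): if $Y \subsetneq X$, pick any $x \in X - Y$, so that $Y \subseteq X - \{x\} \subsetneq X$, and by CL2 we get $cl_M(Y) \subseteq cl_M(X-\{x\}) \subseteq cl_M(X)$. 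The hypothesis gives $cl_M(X-\{x\}) \neq cl_M(X)$, and since $cl_M(X-\{x\}) \subseteq cl_M(X)$ this inclusion is proper; combined with $cl_M(Y) \subseteq cl_M(X - \{x\})$ it forces $cl_M(Y) \subsetneq cl_M(X)$, so $(X,Y) \notin \Theta_M$. Hence no such $Y$ shares the class of $X$, establishing minimality.

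In short, the proof hinges on the observation that any proper subset of $X$ is contained in some single-element deletion $X-\{x\}$, so monotonicity of $cl_M$ lets the finitely many deletion hypotheses propagate down to all proper subsets. I would present the forward direction as a one-line contrapositive and devote the bulk of the argument to the reverse direction, making the monotonicity step explicit; no properties beyond CL1 and CL2 of the matroid closure should be needed, and the hyperplane characterizations developed earlier are not required here.
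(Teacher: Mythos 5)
Your proof is correct, but it takes a genuinely different route from the paper's. Both proofs share the same forward direction (immediate from minimality) and the same key combinatorial observation for the converse: any proper subset $Y \subsetneq X$ sits inside a single-element deletion $X - \{x\}$ for some $x \in X - Y$. Where you diverge is in the mechanism used to propagate non-equivalence from $X - \{x\}$ down to $Y$. You work directly with the definition of $\Theta_{M}$ as equality of closures and use only monotonicity (CL2) of $cl_{M}$: from $cl_{M}(Y) \subseteq cl_{M}(X - \{x\}) \subsetneq cl_{M}(X)$ you conclude $(X, Y) \notin \Theta_{M}$. The paper instead argues by contradiction at the abstract level of dependence spaces: assuming $(Y, X) \in \Theta_{M}$, it applies the congruence property (established in Theorem \ref{T:thedependencespaceinducedbymatroid}) to the pairs $(Y, X)$ and $(X - \{x\}, X - \{x\})$ to obtain $(X - \{x\}, X) \in \Theta_{M}$, contradicting the hypothesis. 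The trade-off is instructive: your argument is more elementary and self-contained for this particular relation --- it bypasses Theorem \ref{T:thedependencespaceinducedbymatroid} entirely, and hence all the hyperplane machinery behind it --- whereas the paper's argument never touches $cl_{M}$ at all, so it actually proves the stronger statement that in \emph{any} dependence space $(U, \Theta)$ a set is consistent if and only if it is not $\Theta$-equivalent to any of its single-element deletions; your closure-based proof is tied to the specific relation $\Theta_{M}$ and would not transfer to that general setting.
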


\begin{proof}
According to Theorem \ref{T:thedependencespaceinducedbymatroid}, we know $(U, \Theta_{M})$ is a dependence space.
If $X \in IND_{\Theta_{M}}$, then obviously $(X, X - \{x\}) \notin \Theta_{M}$ for all $x \in X$.
Conversely, if $X \notin IND_{\Theta_{M}}$, then there exists $Y \subset X$ such that $(Y, X) \in \Theta_{M}$.
Since $Y \subset X$, there exists $x \in X - Y$ such that $Y \subseteq X - \{x\} \subset X$.
According to $(Y, X) \in \Theta_{M}$, $(X - \{x\}, X - \{x\}) \in \Theta_{M}$ and $(U, \Theta_{M})$ is a dependence space, then $(Y \bigcup (X - \{x\}), X \bigcup (X - \{x\})) = (X - \{x\}, X) \in \Theta_{M}$ which contradicts the assumption.
\end{proof}

From the equivalent characterization, we find that the consistent sets have a close relationship with the independent sets of matroids.
First, we review a well-known result of matroid theory.

\begin{lemma}\cite{Oxley93Matroid}
\label{L:onecharacterizationforindependentsetfromtheviewpointofclosureoperator}
Let $M$ be a matroid of $U$ and $X \subseteq U$.
$X \in \mathcal{I}(M)$ if and only if $x \notin cl_{M}(X - \{x\})$ for all $x \in X$.
\end{lemma}

For a matroid $M$, the following theorem indicates that any independent set of the matroid is a consistent set of the corresponding dependence space, and vice versa.

\begin{theorem}
\label{T:therelationshipbetweentheindependentsetsinmatroidandindependencespace}
Let $M$ be a matroid of $U$.
Then $IND_{\Theta_{M}} = \mathcal{I}(M)$.
\end{theorem}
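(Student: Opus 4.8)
The plan is to establish the set equality $IND_{\Theta_{M}} = \mathcal{I}(M)$ by showing mutual inclusion, exploiting the two characterizations already at hand: the combinatorial description of consistent sets from Proposition~\ref{P:anequivalencecharacterizationforindependentsetindependencespace} and the closure-based description of independent sets from Lemma~\ref{L:onecharacterizationforindependentsetfromtheviewpointofclosureoperator}. Both of these describe membership in terms of a condition ranging over all $x$ in $X$, so the natural strategy is to show that the two per-element conditions are logically equivalent for each $x \in X$, which would immediately yield the equality of the two families.

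First I would unwind the definition of $\Theta_{M}$. By Definition~\ref{D:arelationinducedbymatroids}, $(X, X - \{x\}) \in \Theta_{M}$ means precisely $cl_{M}(X) = cl_{M}(X - \{x\})$. Since $X - \{x\} \subseteq X$, monotonicity (CL2) always gives $cl_{M}(X - \{x\}) \subseteq cl_{M}(X)$, so the content of the equality is the reverse inclusion $cl_{M}(X) \subseteq cl_{M}(X - \{x\})$. By (CL1) we have $x \in X \subseteq cl_{M}(X)$, and one checks using the closure axioms that $cl_{M}(X) = cl_{M}((X - \{x\}) \cup \{x\}) = cl_{M}(X - \{x\})$ holds if and only if $x \in cl_{M}(X - \{x\})$; intuitively, adjoining an element already in the closure does not enlarge the closure, while adjoining one outside it does. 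Thus the key bridge is the equivalence
\begin{center}
$(X, X - \{x\}) \in \Theta_{M} \iff x \in cl_{M}(X - \{x\})$.
\end{center}

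With that equivalence in place, the theorem follows by negation. Proposition~\ref{P:anequivalencecharacterizationforindependentsetindependencespace} says $X \in IND_{\Theta_{M}}$ iff $(X, X - \{x\}) \notin \Theta_{M}$ for all $x \in X$, which by the bridge is equivalent to $x \notin cl_{M}(X - \{x\})$ for all $x \in X$, which by Lemma~\ref{L:onecharacterizationforindependentsetfromtheviewpointofclosureoperator} is exactly the condition $X \in \mathcal{I}(M)$. Chaining these equivalences gives $X \in IND_{\Theta_{M}} \iff X \in \mathcal{I}(M)$, and since $X \subseteq U$ was arbitrary, $IND_{\Theta_{M}} = \mathcal{I}(M)$.

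I expect the main obstacle to be the careful justification of the bridging equivalence, in particular the direction showing $x \in cl_{M}(X - \{x\})$ forces $cl_{M}(X) = cl_{M}(X - \{x\})$. This needs the idempotence and monotonicity axioms (CL3) and (CL2): from $x \in cl_{M}(X - \{x\})$ one gets $X = (X - \{x\}) \cup \{x\} \subseteq cl_{M}(X - \{x\})$, hence $cl_{M}(X) \subseteq cl_{M}(cl_{M}(X - \{x\})) = cl_{M}(X - \{x\})$. The converse direction is the easier observation that if the closures coincide then $x \in cl_{M}(X) = cl_{M}(X - \{x\})$. Everything else is routine logical bookkeeping, so the proof is short once this lemma-level equivalence is nailed down.
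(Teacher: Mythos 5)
Your proof is correct and takes essentially the same route as the paper's: both reduce the theorem, via Proposition~\ref{P:anequivalencecharacterizationforindependentsetindependencespace} and Lemma~\ref{L:onecharacterizationforindependentsetfromtheviewpointofclosureoperator}, to the per-element bridge $(X, X - \{x\}) \in \Theta_{M} \Leftrightarrow x \in cl_{M}(X - \{x\})$, proving the easy direction from $x \in X \subseteq cl_{M}(X)$ and the harder direction by showing $X \subseteq cl_{M}(X - \{x\})$ and hence $cl_{M}(X) \subseteq cl_{M}(X - \{x\})$. The only cosmetic difference is that you justify that last step purely with the closure axioms (CL1)--(CL3), while the paper detours briefly through the rank function; the arguments are otherwise identical.
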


\begin{proof}
Suppose $X \subseteq U$.
According to Proposition \ref{P:anequivalencecharacterizationforindependentsetindependencespace} and Lemma \ref{L:onecharacterizationforindependentsetfromtheviewpointofclosureoperator}, we need to prove that for all $x \in X$,
$(X, X - \{x\}) \notin \Theta_{M}$ if and only if $x \notin cl_{M}(X - \{x\})$.
First, we prove the sufficiency.
For all $x \in X$, $x \in cl_{M}(X)$ because $X \subseteq cl_{M}(X)$.
Combining with $x \notin cl_{M}(X - \{x\})$, we have $cl_{M}(X) \neq cl_{M}(X - \{x\})$, i.e., $(X, X - \{x\}) \notin \Theta_{M}$.
Conversely, if there exists $x \in X$ such that $x \in cl_{M}(X - \{x\})$, then $r_{M}(X) = r_{M}(X - \{x\})$ which implies $X \subseteq cl_{M}(X - \{x\})$, i.e., $cl_{M}(X) \subseteq cl_{M}(X - \{x\})$.
Since $X - \{x\} \subseteq X$, $cl_{M}(X - \{x\}) \subseteq cl_{M}(X)$.
Hence $cl_{M}(X) = cl_{M}(X - \{x\})$, i.e., $(X, X - \{x\}) \in \Theta_{M}$ which implies a contradiction.
\end{proof}

Based on the concept of consistent sets, reducts of a dependence space are defined.

\begin{definition}(Reduct of dependence space~\cite{NovotnyPawlak91Algebraic})
\label{D:thedefinitionofreductsofdependencespace}
Let $(U, \Theta)$ be a dependence space.
For all $X \subseteq U$, a subset $Y$ of $X$ is called a reduct of $X$, if $(X, Y) \in \Theta$ and $Y \in IND_{\Theta}$.
The set of all reducts of $X$ is denoted by $RED_{\Theta}(X)$.
\end{definition}

Theorem \ref{T:therelationshipbetweentheindependentsetsinmatroidandindependencespace} reveals the relationship between the consistent sets of dependence spaces and the independent sets of matroids.
Naturally, it motives us to connect the reducts of dependence spaces with other concepts of matroids.
At first, an important result in matroids is shown as follows.

\begin{lemma}\cite{Oxley93Matroid}
\label{L:therelationshipbetweentheclosedsetofanybaseofasetandtheset}
Let $M$ be a matroid of $U$ and $X \subseteq U$.
If $B_{X} \in \mathcal{B}(M|X)$, then $cl_{M}(B_{X}) = cl_{M}(X)$.
\end{lemma}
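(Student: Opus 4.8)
The plan is to prove the equality by establishing the two inclusions $cl_{M}(B_{X}) \subseteq cl_{M}(X)$ and $cl_{M}(X) \subseteq cl_{M}(B_{X})$ separately, the first being immediate and the second resting on the maximality of $B_{X}$ inside $X$. First I would unpack what the hypothesis $B_{X} \in \mathcal{B}(M|X)$ gives us. Since $\mathcal{I}(M|X) = \{I \subseteq X: I \in \mathcal{I}(M)\}$, the set $B_{X}$ is an independent set of $M$ contained in $X$ that is maximal with respect to inclusion among such sets; in particular $r_{M}(B_{X}) = |B_{X}|$. Because $B_{X} \subseteq X$, the monotonicity condition $(CL2)$ of Proposition \ref{P:closureaxiom} yields $cl_{M}(B_{X}) \subseteq cl_{M}(X)$ at once, so only the reverse inclusion requires any work.

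For the reverse direction I would first prove the stronger statement $X \subseteq cl_{M}(B_{X})$ and then upgrade it using idempotence. Fix $x \in X$. If $x \in B_{X}$, then $x \in cl_{M}(B_{X})$ directly by $(CL1)$. If $x \in X - B_{X}$, then $B_{X} \bigcup \{x\} \subseteq X$ cannot be independent, for otherwise it would contradict the maximality of $B_{X}$ in $M|X$; hence $r_{M}(B_{X} \bigcup \{x\}) < |B_{X}| + 1$. Combining this with the obvious inequality $r_{M}(B_{X}) \leq r_{M}(B_{X} \bigcup \{x\})$ and with $r_{M}(B_{X}) = |B_{X}|$ forces $r_{M}(B_{X} \bigcup \{x\}) = r_{M}(B_{X})$, which is exactly the defining condition for $x \in cl_{M}(B_{X})$. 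Thus every $x \in X$ lies in $cl_{M}(B_{X})$, i.e. $X \subseteq cl_{M}(B_{X})$. Applying $(CL2)$ and the idempotence law $(CL3)$ to this inclusion then gives $cl_{M}(X) \subseteq cl_{M}(cl_{M}(B_{X})) = cl_{M}(B_{X})$, and together with the first inclusion this yields $cl_{M}(B_{X}) = cl_{M}(X)$.

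The main obstacle is the middle step: translating the combinatorial maximality of $B_{X}$ in the restriction matroid into the rank equality $r_{M}(B_{X} \bigcup \{x\}) = r_{M}(B_{X})$ that certifies $x \in cl_{M}(B_{X})$. Everything else is routine bookkeeping with the closure axioms. One point worth keeping in view is that the argument relies on the fact that a set fails to be independent precisely when its rank is strictly below its cardinality, which is a direct consequence of the definition $r_{M}(Y) = \max\{|I|: I \subseteq Y, I \in \mathcal{I}\}$.
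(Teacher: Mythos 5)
The paper does not prove this lemma at all—it is quoted from Oxley's book with a citation—so there is no in-paper argument to compare against. Your proof is correct and is essentially the standard textbook argument: the inclusion $cl_{M}(B_{X}) \subseteq cl_{M}(X)$ from monotonicity, the key step $X \subseteq cl_{M}(B_{X})$ via maximality of $B_{X}$ in $M|X$ (a dependent set $B_{X} \bigcup \{x\}$ has rank strictly below its cardinality, forcing $r_{M}(B_{X} \bigcup \{x\}) = r_{M}(B_{X})$), and the upgrade to $cl_{M}(X) \subseteq cl_{M}(B_{X})$ by $(CL2)$ and $(CL3)$. All steps, including the supporting fact that a set is independent exactly when its rank equals its cardinality, check out against the definitions used in the paper.
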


In fact, the reducts of dependence spaces have a close relationship with the bases of matroids.
For a given set, we find that the family of reducts of the set in the dependence space induced by a matroid is a collection of bases of the matroid imposed restriction on the set.

\begin{theorem}
\label{T:therelationbetweenreductsandbses}
Let $M$ be a matroid of $U$ and $X \subseteq U$.
$RED_{\Theta_{M}}(X) = \mathcal{B}(M|X)$.
\end{theorem}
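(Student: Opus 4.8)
The plan is to prove the set equality by establishing the two inclusions separately, leaning heavily on the two results already in hand: Theorem~\ref{T:therelationshipbetweentheindependentsetsinmatroidandindependencespace}, which identifies $IND_{\Theta_{M}}$ with $\mathcal{I}(M)$, and Lemma~\ref{L:therelationshipbetweentheclosedsetofanybaseofasetandtheset}, which says that any base of the restriction $M|X$ has the same closure as $X$. I would also keep in mind the definition of the restriction matroid, so that for $Y \subseteq X$ the conditions $Y \in \mathcal{I}(M)$ and $Y \in \mathcal{I}(M|X)$ are interchangeable.

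The inclusion $\mathcal{B}(M|X) \subseteq RED_{\Theta_{M}}(X)$ I expect to be the easy direction. Given $B_{X} \in \mathcal{B}(M|X)$, I would first note $B_{X} \subseteq X$ and $B_{X} \in \mathcal{I}(M)$, so $B_{X} \in IND_{\Theta_{M}}$ by Theorem~\ref{T:therelationshipbetweentheindependentsetsinmatroidandindependencespace}. Then Lemma~\ref{L:therelationshipbetweentheclosedsetofanybaseofasetandtheset} gives $cl_{M}(B_{X}) = cl_{M}(X)$, which is exactly $(X, B_{X}) \in \Theta_{M}$. By Definition~\ref{D:thedefinitionofreductsofdependencespace} these two facts together say $B_{X} \in RED_{\Theta_{M}}(X)$, so this direction is essentially immediate once the earlier results are invoked.

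For the reverse inclusion $RED_{\Theta_{M}}(X) \subseteq \mathcal{B}(M|X)$, take a reduct $Y$ of $X$. Then $Y \subseteq X$, $cl_{M}(Y) = cl_{M}(X)$, and $Y \in IND_{\Theta_{M}} = \mathcal{I}(M)$, so $Y \in \mathcal{I}(M|X)$. The remaining task is to show $Y$ is \emph{maximal} among independent subsets of $X$, i.e.\ actually a base of $M|X$. I would argue by contradiction: if $Y$ were not maximal, the augmentation property (I3) applied in $M|X$ would produce an element $z \in X - Y$ with $Y \cup \{z\} \in \mathcal{I}(M)$. On the one hand, independence of $Y \cup \{z\}$ forces $z \notin cl_{M}(Y)$ (via Lemma~\ref{L:onecharacterizationforindependentsetfromtheviewpointofclosureoperator}, or directly from the rank jump). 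On the other hand, $z \in X \subseteq cl_{M}(X) = cl_{M}(Y)$, which is the sought contradiction. Hence $Y$ is maximal and $Y \in \mathcal{B}(M|X)$.

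The main obstacle, such as it is, lies in this maximality step of the second inclusion: the conceptual work is translating the closure-equality condition $cl_{M}(Y) = cl_{M}(X)$ into the combinatorial statement that no element of $X$ can be adjoined to $Y$ while preserving independence. Everything hinges on the clean dichotomy that an element lies in $cl_{M}(Y)$ precisely when adjoining it destroys independence, which is supplied by Lemma~\ref{L:onecharacterizationforindependentsetfromtheviewpointofclosureoperator}. Once that bridge is in place the argument closes without further calculation.
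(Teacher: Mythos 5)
Your proof is correct, and its first half coincides with the paper's: the inclusion $\mathcal{B}(M|X) \subseteq RED_{\Theta_{M}}(X)$ is obtained in both cases from Theorem~\ref{T:therelationshipbetweentheindependentsetsinmatroidandindependencespace} together with Lemma~\ref{L:therelationshipbetweentheclosedsetofanybaseofasetandtheset}. Where you genuinely diverge is in the maximality step of the reverse inclusion. The paper stays entirely inside the dependence-space formalism: if a reduct $Y$ were not maximal among consistent subsets of $X$, it takes a strictly larger consistent $Y_{1} \subseteq X$, applies the congruence property of $\Theta_{M}$ (Theorem~\ref{T:thedependencespaceinducedbymatroid}) to the pairs $(X, Y)$ and $(Y_{1}, Y_{1})$ to conclude $(X, Y_{1}) \in \Theta_{M}$, and then transitivity gives $(Y, Y_{1}) \in \Theta_{M}$, contradicting the minimality (consistency) of $Y_{1}$ in its $\Theta_{M}$-class. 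You instead translate the problem back into matroid language: an augmenting element $z \in X - Y$ with $Y \cup \{z\}$ independent must satisfy $z \notin cl_{M}(Y)$ by Lemma~\ref{L:onecharacterizationforindependentsetfromtheviewpointofclosureoperator}, yet $z \in X \subseteq cl_{M}(X) = cl_{M}(Y)$, a contradiction. Both arguments are sound, and the trade-off is instructive: your route never uses the congruence property of $\Theta_{M}$ at all --- only its definition through closures --- so it is more self-contained on the matroid side and works even before Theorem~\ref{T:thedependencespaceinducedbymatroid} is established; the paper's route, by contrast, exercises the dependence-space axioms and the definition of consistency directly, which presents the identity $RED_{\Theta_{M}}(X) = \mathcal{B}(M|X)$ as a purely algebraic consequence of Theorems~\ref{T:thedependencespaceinducedbymatroid} and \ref{T:therelationshipbetweentheindependentsetsinmatroidandindependencespace}, in keeping with the paper's aim of showcasing the dependence-space machinery.
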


\begin{proof}
According to the definition of bases of restriction matroids and Theorem \ref{T:therelationshipbetweentheindependentsetsinmatroidandindependencespace}, we know $\mathcal{B}(M|X) = Max(\{B \subseteq X: B \in IND_{\Theta_{M}}\})$.
For any $Y \in RED_{\Theta_{M}}(X)$, then $Y \in \{B \subseteq X: B \in IND_{\Theta_{M}}\}$.
Next, we need to prove $Y \in Max(\{B \subseteq X: B \in IND_{\Theta_{M}}\})$.
Otherwise, there exists $Y_{1} \in \{B \subseteq X: B \in IND_{\Theta_{M}}\}$ such that $Y \subset Y_{1}$.
Since $Y \in RED_{\Theta_{M}}(X)$, $(X, Y) \in \Theta_{M}$.
According to $(Y_{1}, Y_{1}) \in \Theta_{M}$ and $\Theta_{M}$ is a congruence relation, we have $(X \bigcup Y_{1}, Y \bigcup Y_{1}) \in \Theta_{M}$, i.e., $(X, Y_{1}) \in \Theta_{M}$.
Since $\Theta_{M}$ is an equivalence relation, $(Y, Y_{1}) \in \Theta_{M}$ which contradicts $Y_{1} \in IND_{\Theta_{M}}$.
Therefore, $RED_{\Theta_{M}}(X) \subseteq \mathcal{B}(M|X)$.
Conversely, since $\mathcal{B}(M|X) \subseteq \mathcal{I}(M)$, $\mathcal{B}(M|X) \subseteq IND_{\Theta_{M}}$.
According to Lemma \ref{L:therelationshipbetweentheclosedsetofanybaseofasetandtheset}, we know for all $Y \in \mathcal{B}(M|X)$, $cl_{M}(Y) = cl_{M}(X)$, i.e., $(X, Y) \in \Theta_{M}$.
Therefore $\mathcal{B}(M|X) \subseteq RED_{\Theta_{M}}(X)$.
\end{proof}

Finally, an example is provided to conclude this section.

\begin{example}
Suppose matroid $M$ is given in Example \ref{E:example1}.
Let $X = \{1, 3\}$.
Since $cl_{M}(\{1\}) = cl_{M}(\{3\}) = cl_{M}(\{1, 3\}) = \{1, 3\}$, $(X, \{1\}) \in \Theta_{M}$, $(X, \{3\}) \in \Theta_{M}$ and $(X, X) \in \Theta_{M}$.
Utilizing the definition of consistent sets and $cl_{M}(\emptyset) = \emptyset$, we know $\{1\} \in IND_{\Theta_{M}}$ and $\{3\} \in IND_{\Theta_{M}}$.
Therefore, $RED_{\Theta_{M}}(X) = \{Y \subseteq X: (X, Y) \in \Theta_{M}, Y \in IND_{\Theta_{M}}\} = \{\{1\}, \{3\}\} = Max(\{B \subseteq X: B \in \mathcal{I}(M)\}) = \mathcal{B}(M|X)$.
\end{example}
\section{Dependence space approach to matroid }
\label{S:Dependencespaceapproachtomatroids}
Dependence spaces introduced as a general abstract setting for studying informational dependency can be used to study the matroids.
In this section, we present two expressions for the bases of matroids in terms of closed sets through dependence spaces.
In fact, the reducts of dependence spaces can be equivalently characterized by the following form.

\begin{proposition}
\label{P:theotherexpressionofreductsindependencespace}
Let $(U, \Theta)$ be a dependence space and $X \subseteq U$.
Then $RED_{\Theta}(X) = Min(\{Y \subseteq X: (X, Y) \in \Theta\})$.
\end{proposition}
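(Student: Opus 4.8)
The plan is to prove the set equality by establishing the two inclusions, using only that $\Theta$ is an equivalence relation (symmetry and transitivity); the congruence property will not be needed here. Throughout I write $\mathcal{F} = \{Y \subseteq X: (X, Y) \in \Theta\}$ for the collection whose minimal elements we wish to identify, and I recall that $Y$ is consistent precisely when $Y$ is minimal with respect to inclusion inside its entire $\Theta$-class.

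For the inclusion $RED_{\Theta}(X) \subseteq Min(\mathcal{F})$, I would take $Y \in RED_{\Theta}(X)$, so that $Y \subseteq X$, $(X,Y) \in \Theta$, and $Y \in IND_{\Theta}$; the first two conditions already give $Y \in \mathcal{F}$. To check minimality, suppose $Z \in \mathcal{F}$ with $Z \subseteq Y$. From $(X,Z) \in \Theta$ and $(X,Y) \in \Theta$, symmetry and transitivity yield $(Y,Z) \in \Theta$, so $Z$ lies in the $\Theta$-class of $Y$ and satisfies $Z \subseteq Y$; consistency of $Y$ then forces $Z = Y$. Hence $Y \in Min(\mathcal{F})$.

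For the reverse inclusion, I would take $Y \in Min(\mathcal{F})$, so $Y \subseteq X$ and $(X,Y) \in \Theta$, and it remains to verify $Y \in IND_{\Theta}$. Given any $W \subseteq Y$ with $(Y,W) \in \Theta$, transitivity with $(X,Y) \in \Theta$ gives $(X,W) \in \Theta$, while $W \subseteq Y \subseteq X$ gives $W \subseteq X$; thus $W \in \mathcal{F}$. Minimality of $Y$ in $\mathcal{F}$ together with $W \subseteq Y$ then forces $W = Y$, so $Y$ is minimal in its $\Theta$-class, i.e. $Y \in IND_{\Theta}$. Combined with $(X,Y) \in \Theta$ this yields $Y \in RED_{\Theta}(X)$, completing the proof.

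The argument is essentially bookkeeping with the equivalence-relation axioms, so I do not anticipate a genuine obstacle. The one point requiring care is recognizing that minimality within the \emph{full} $\Theta$-class, as demanded by the definition of a consistent set, coincides, under the hypothesis $(X,Y) \in \Theta$, with minimality within the \emph{smaller} family $\mathcal{F}$ of subsets of $X$; making this equivalence explicit in both directions is precisely what drives the proof.
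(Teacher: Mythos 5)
Your proof is correct and follows essentially the same route as the paper's: both inclusions are established using only the symmetry and transitivity of $\Theta$, with the key observation in each direction being that minimality in the full $\Theta$-class and minimality in $\{Y \subseteq X : (X,Y) \in \Theta\}$ coincide once $(X,Y) \in \Theta$ is known. The only cosmetic difference is that the paper argues both inclusions by contradiction while you argue them directly.
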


\begin{proof}
For all $B \in RED_{\Theta}(X)$, then $B \in IND_{\Theta}$ and $B \in \{Y \subseteq X: (X, Y) \in \Theta\}$.
If $B \notin Min(\{Y \subseteq X: (X, Y) \in \Theta\})$, then there exists $B_{1} \in \{Y \subseteq X: (X, Y) \in \Theta\}$ such that $B_{1} \subset B$.
Thus $(X, B_{1}) \in \Theta$.
Combining with $(X, B) \in \Theta$, then $(B_{1}, B) \in \Theta$ because $\Theta$ is transitive.
It contradicts that $B \in IND_{\Theta}$.
Therefore $RED_{\Theta}(X) \subseteq Min(\{Y \subseteq X: (X, Y) \in \Theta\})$.
Conversely, for all $B \in Min(\{Y \subseteq X: (X, Y) \in \Theta\})$, we need to prove that $B \in IND_{\Theta}$.
Otherwise, there exists $B_{1} \subseteq U$ such that $B_{1} \subset B$ and $(B, B_{1}) \in \Theta$.
Then $B_{1} \subseteq X$ and $(X, B_{1}) \in \Theta$ because $B \subseteq X$ and $\Theta$ is transitive.
That implies that $B_{1} \in \{Y \subseteq X: (X, Y) \in \Theta\}$ which contradicts the minimality of $B$.
Hence $Min(\{Y \subseteq X: (X, Y) \in \Theta\}) \subseteq RED_{\Theta}(X)$.
\end{proof}

Based on the above results, the following proposition presents an expression of bases from the viewpoint of closed set.
For a given subset $X$ of $U$, it is interesting that any base of the restriction of matroid $M$ to $X$ is minimal in the sense that it does not contain any other element of the family whose any element is contained in $X$ and has the same closure with $X$.

\begin{theorem}
\label{T:anexpressionofbasesbycloasedsets}
Let $M$ be a matroid of $U$ and $X \subseteq U$.
$\mathcal{B}(M|X) = Min(\{Y \subseteq X: cl_{M}(Y) = cl_{M}(X)\})$.
\end{theorem}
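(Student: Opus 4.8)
The plan is to derive the identity purely by composing three results already established earlier in the paper, so that no fresh matroid-theoretic argument is required. The key observation is that the right-hand side is nothing but the closed-set reformulation of the reduct description in Proposition~\ref{P:theotherexpressionofreductsindependencespace}, once the congruence $\Theta_M$ is expanded through its defining condition.

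First I would note that $(U,\Theta_M)$ is a dependence space by Theorem~\ref{T:thedependencespaceinducedbymatroid}, so Proposition~\ref{P:theotherexpressionofreductsindependencespace} applies with $\Theta = \Theta_M$ and yields
\[
RED_{\Theta_M}(X) = Min(\{Y \subseteq X: (X,Y) \in \Theta_M\}).
\]
Then I would rewrite the indexing family using Definition~\ref{D:arelationinducedbymatroids}: because $(X,Y) \in \Theta_M$ is \emph{by definition} equivalent to $cl_M(X) = cl_M(Y)$, the two families $\{Y \subseteq X: (X,Y) \in \Theta_M\}$ and $\{Y \subseteq X: cl_M(Y) = cl_M(X)\}$ are literally the same set, and hence so are their $Min$ collections.

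Finally I would invoke Theorem~\ref{T:therelationbetweenreductsandbses}, which states $RED_{\Theta_M}(X) = \mathcal{B}(M|X)$. Chaining the three equalities gives
\[
\mathcal{B}(M|X) = RED_{\Theta_M}(X) = Min(\{Y \subseteq X: cl_M(Y) = cl_M(X)\}),
\]
which is exactly the claimed expression.

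Since all of the genuine work has already been carried out in the earlier lemmas and propositions, I do not anticipate a substantive obstacle here. The only point requiring a moment's care is the verification that replacing the indexing family by a set-theoretically equal family leaves the $Min$ operator unaffected; this is immediate, as $Min$ is determined entirely by the underlying family together with the inclusion order. In short, the proof is a translation exercise: convert the closure condition into membership in $\Theta_M$, apply the reduct-equals-basis identity, and read off the result.
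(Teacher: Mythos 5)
Your proposal is correct and is essentially identical to the paper's own proof, which likewise chains Theorem~\ref{T:thedependencespaceinducedbymatroid}, Proposition~\ref{P:theotherexpressionofreductsindependencespace}, and Theorem~\ref{T:therelationbetweenreductsandbses}, with the definition of $\Theta_{M}$ unfolded to turn the congruence condition into the closure equality. The only difference is that you spell out the unfolding step explicitly, which the paper leaves implicit.
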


\begin{proof}
According to Theorem \ref{T:thedependencespaceinducedbymatroid}, \ref{T:therelationbetweenreductsandbses} and Proposition \ref{P:theotherexpressionofreductsindependencespace}, $\mathcal{B}(M|X) = RED_{\Theta_{M}}(X) = Min(\{Y \subseteq X: cl_{M}(Y) = cl_{M}(X)\})$.
\end{proof}

As is known, each family $\mathcal{H}$ of $2^{U}$ can define a congruence relation by defining
\begin{center}
    $\Gamma(\mathcal{H}) = \{(B_{1}, B_{2}) \in 2^{U} \times 2^{U}: \forall H \in \mathcal{H} (B_{1} \subseteq H \leftrightarrow B_{2} \subseteq H\})$.
\end{center}
Based on the relation, in~\cite{Novotny98Applications}, Novotn$\grave{y}$ introduce the concept of dense family.
It shows that the family $\mathcal{H}$ is dense in a dependence space $(U, \Theta)$ if and only if the congruence relation $\Gamma(\mathcal{H})$ defined by $\mathcal{H}$ equals to $\Theta$.
The second part of this section, we aim to present the other expression for bases of matroids through dependence spaces.
For the purpose, we take the hyperplane family $\mathcal{H}(M)$ to induce a congruence relation $\Gamma(\mathcal{H}(M))$.
The following example help to illustrate the congruence relation.

\begin{example}
Suppose matroid $M$ is given in Example \ref{E:example1}.
It is clear that $\mathcal{H}(M) = \{\{2\}, \{1, 3\}\}$.
Let $B_{1} = \{1\}$ and $B_{2} = \{3\}$.
Then $(B_{1}, B_{2}) \in \Gamma(\mathcal{H}(M))$.
Because when we take $H = \{2\}$, $B_{1} \nsubseteq H$ and $B_{2} \nsubseteq H$ which implies $B_{1} \subseteq H \leftrightarrow B_{2} \subseteq H$ holds.
When we take $H = \{1, 3\}$, $B_{1} \subseteq H$ and $B_{2} \subseteq H$ which implies $B_{1} \subseteq H \leftrightarrow B_{2} \subseteq H$ holds.
Let $B_{1} = \{2\}$ and $B_{2} = \{1, 2\}$.
Then $(B_{1}, B_{2}) \notin \Gamma(\mathcal{H}(M))$.
Because when we take $H = \{2\}$, $B_{1} \subseteq H$ and $B_{2} \nsubseteq H$ which implies $B_{1} \subseteq H \rightarrow B_{2} \subseteq H$ dose not hold.
By this way, we can obtain $\Gamma(\mathcal{H}(M)) = \Theta$.
\end{example}

Inspired by the example, we find that the congruence relation $\Gamma(\mathcal{H}(M))$ is dense in the dependence space $(U, \Theta_{M})$.

\begin{proposition}
\label{P:thedensefamilyofthedependencespaceinducedbymatroid}
Let $M$ be a matroid of $U$.
Then $\mathcal{H}(M)$ is dense in $(U, \Theta_{M})$.
\end{proposition}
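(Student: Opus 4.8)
The plan is to reduce the claim to a single identity between relations on $2^{U}$. By the characterization recalled just before the statement, a family $\mathcal{H}$ is dense in a dependence space $(U, \Theta)$ if and only if $\Gamma(\mathcal{H}) = \Theta$. Consequently, to prove that $\mathcal{H}(M)$ is dense in $(U, \Theta_{M})$ it suffices to establish the equality $\Gamma(\mathcal{H}(M)) = \Theta_{M}$. First I would unfold both sides explicitly: for $X, Y \subseteq U$, membership $(X, Y) \in \Theta_{M}$ means $cl_{M}(X) = cl_{M}(Y)$ by Definition \ref{D:arelationinducedbymatroids}, whereas $(X, Y) \in \Gamma(\mathcal{H}(M))$ means that for every hyperplane $H \in \mathcal{H}(M)$ one has $X \subseteq H \leftrightarrow Y \subseteq H$. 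So the whole proposition comes down to verifying the equivalence $cl_{M}(X) = cl_{M}(Y) \Leftrightarrow \forall H \in \mathcal{H}(M)\,(X \subseteq H \leftrightarrow Y \subseteq H)$ for all $X, Y \subseteq U$.

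The key step is to split the equality of closures into the two inclusions $cl_{M}(X) \subseteq cl_{M}(Y)$ and $cl_{M}(Y) \subseteq cl_{M}(X)$, and then apply Proposition \ref{P:ancharacterizesofclosedsets} to each separately. That proposition gives $cl_{M}(X) \subseteq cl_{M}(Y) \Leftrightarrow \forall H \in \mathcal{H}(M)\,(Y \subseteq H \rightarrow X \subseteq H)$; applying it again with the roles of $X$ and $Y$ interchanged yields $cl_{M}(Y) \subseteq cl_{M}(X) \Leftrightarrow \forall H \in \mathcal{H}(M)\,(X \subseteq H \rightarrow Y \subseteq H)$. Conjoining the two right-hand sides produces exactly the biconditional $\forall H \in \mathcal{H}(M)\,(X \subseteq H \leftrightarrow Y \subseteq H)$, so equality of the closures is equivalent to membership in $\Gamma(\mathcal{H}(M))$. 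This shows $\Theta_{M} \subseteq \Gamma(\mathcal{H}(M))$ and $\Gamma(\mathcal{H}(M)) \subseteq \Theta_{M}$ simultaneously, hence $\Gamma(\mathcal{H}(M)) = \Theta_{M}$, and density follows from the stated criterion.

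I do not expect any genuine obstacle here, since the statement is essentially a corollary of Proposition \ref{P:ancharacterizesofclosedsets} together with the density criterion; the only point demanding care is the bookkeeping of the two implication directions. Specifically, Proposition \ref{P:ancharacterizesofclosedsets} packages one inclusion of closures as the implication $Y \subseteq H \rightarrow X \subseteq H$, so I must be deliberate about which inclusion each quantified implication encodes and must invoke the proposition with the arguments swapped to recover the second direction, so that the two implications assemble into the symmetric biconditional rather than a single implication. Once that symmetry is handled correctly, the equality $\Gamma(\mathcal{H}(M)) = \Theta_{M}$ is immediate and the proof closes.
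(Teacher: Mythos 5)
Your proposal is correct and takes essentially the same route as the paper: reduce density to the single identity $\Gamma(\mathcal{H}(M)) = \Theta_{M}$ via the stated criterion, and then obtain that identity from Proposition \ref{P:ancharacterizesofclosedsets}. The paper compresses this into one line, whereas you spell out the only nontrivial bookkeeping — applying Proposition \ref{P:ancharacterizesofclosedsets} twice with the roles of $X$ and $Y$ swapped and conjoining the two quantified implications into the biconditional — which is exactly what the paper's terse proof leaves implicit.
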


\begin{proof}
By the definition of dense family, we need to prove $\Theta_{M} = \Gamma(\mathcal{H}(M))$.
Utilizing Proposition \ref{P:ancharacterizesofclosedsets}, we have proved the result.
\end{proof}

For a dependence space, the redusts of a given subset can be characterized by its dense family through the following way.

\begin{lemma}\cite{ZhangLiangWu03Information}
\label{L:acharacterizationforreductbydensefamily}
Let $\Gamma$ be dense in a dependence space $(U, \Theta)$.
For all $X \subseteq U$, $RED_{\Theta}(X) = Min(\{B \subseteq U: B \bigcap T \neq \emptyset~(\forall T \in Com(\Gamma)\})$, where $Com(\Gamma) = \{T \neq \emptyset, X - T \in \Gamma\}$.
\end{lemma}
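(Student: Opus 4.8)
The plan is to reduce the statement to the order-theoretic description of reducts that is already available and then to convert membership in $\Theta$ into a transversal (hitting-set) condition by using the defining property of a dense family. By Proposition~\ref{P:theotherexpressionofreductsindependencespace}, $RED_{\Theta}(X) = Min(\{Y \subseteq X: (X, Y) \in \Theta\})$; hence it suffices to show that, for every $Y \subseteq X$,
\begin{center}
$(X, Y) \in \Theta \Longleftrightarrow Y \bigcap T \neq \emptyset$ for all $T \in Com(\Gamma)$,
\end{center}
and then to verify that the two $Min$-operators single out the same family.

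For the central equivalence I would unfold $(X, Y) \in \Theta$ through density. Since $\Gamma$ is dense, $\Theta$ agrees with the congruence it induces, so $(X, Y) \in \Theta$ says that $X \subseteq H \leftrightarrow Y \subseteq H$ for every $H \in \Gamma$. As $Y \subseteq X$, the implication $X \subseteq H \Rightarrow Y \subseteq H$ holds automatically, so the condition collapses to: for every $H \in \Gamma$ with $X \nsubseteq H$ one has $Y \nsubseteq H$. Using $Y \subseteq X$ once more, $Y \nsubseteq H$ is equivalent to $Y \bigcap (X - H) \neq \emptyset$. I would then match these differences with $Com(\Gamma)$: putting $T = X - H$, the defining requirement $X - T \in \Gamma$ together with $T \neq \emptyset$ selects exactly the members $H = X - T$ of $\Gamma$ that are properly contained in $X$, and for such $H$ the inequality $Y \bigcap (X - H) \neq \emptyset$ is precisely $Y \bigcap T \neq \emptyset$. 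This identifies the blocks $\{X - H: H \in \Gamma,\ H \subsetneq X\}$ with $Com(\Gamma)$ and rewrites the $\Theta$-condition as the transversal condition above.

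It remains to transfer minimality and to reconcile the ambient domain. The equivalence shows that, among subsets of $X$, the sets $\Theta$-related to $X$ are exactly the transversals of $Com(\Gamma)$, so their minimal members coincide. Since the statement lets the transversals $B$ range over all of $U$, I would note that each $T \in Com(\Gamma)$ is concentrated on $X$ (it is determined by $X - T \in \Gamma$), so any element of a transversal lying outside $X$ contributes nothing to meeting the sets of $Com(\Gamma)$ and may be deleted; hence every minimal transversal already lies in $X$, and the two $Min$-families agree. I expect the main obstacle to be exactly this interface between the two presentations: keeping the complement manipulations ($Y \nsubseteq H \Leftrightarrow Y \bigcap (X - H) \neq \emptyset$) and the correspondence $H \leftrightarrow X - H$ consistent, and --- when $\Gamma$ has members not contained in $X$ --- first replacing $\Gamma$ by its traces $\{X \bigcap H: H \in \Gamma\}$, which impose the same conditions on subsets of $X$, so that only the members of $\Gamma$ inside $X$ generate constraints. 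No structural input beyond density and Proposition~\ref{P:theotherexpressionofreductsindependencespace} should be required.
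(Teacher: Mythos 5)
The paper never proves this lemma: it is imported as a black box from \cite{ZhangLiangWu03Information}, so there is no internal argument to compare yours against --- what you have written is the missing derivation, and it is correct. Your route is the natural one inside this paper's framework: reduce via Proposition~\ref{P:theotherexpressionofreductsindependencespace} to describing $Min(\{Y \subseteq X : (X,Y) \in \Theta\})$, unfold density so that, for $Y \subseteq X$, $(X,Y) \in \Theta$ collapses to ``$Y \nsubseteq H$ for every $H \in \Gamma$ with $X \nsubseteq H$'', rewrite this (using $Y \subseteq X$) as the hitting condition $Y \bigcap (X-H) \neq \emptyset$, and then transfer minimality from $2^{U}$ to $2^{X}$ by noting that every block lies inside $X$, so a minimal transversal contains no point outside $X$; each of these steps checks out. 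You were also right to single out the definition of $Com(\Gamma)$ as the delicate interface: read literally, $\{T \neq \emptyset : X - T \in \Gamma\}$ actually makes the lemma \emph{false} --- in the paper's own closing example, $X = \{1,3\}$ and $\mathcal{H}(M) = \{\{2\},\{1,3\}\}$ give literal $Com = \{\{2\}\}$, whose unique minimal transversal $\{2\}$ is not a reduct, whereas the paper computes $Com(\mathcal{H}(M)) = \{\{1,3\}\} = \{X - \{2\}\}$, i.e., it silently uses the ``blocks $X - H$'' reading. Your trace replacement $\{X \bigcap H : H \in \Gamma\}$, under which the nonempty blocks become exactly $\{X - H \neq \emptyset : H \in \Gamma\}$, recovers precisely that intended reading, and your justification is sound because the equivalence only ever tests containment of $Y$ and $X$ (both subsets of $X$) in members of the family; it would be worth stating explicitly that the trace family need not remain dense as a congruence on all of $2^{U}$, but that this is irrelevant since only subsets of $X$ are ever compared. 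In short: the paper buys brevity by citation; your argument buys self-containedness and, as a bonus, exposes and repairs the notational defect in the statement of $Com(\Gamma)$.
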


Therefore, we present the other expression for bases of matroids in terms of hyperplanes.
In fact, for any element of $\mathcal{B}(M|X)$, it is a minimal set with respect to the property of containing at leat one element from each nonempty difference $X - H$, where $H$ is a hyperplane of $M$.

\begin{theorem}
Let $M$ be a matroid of $U$ and $X \subseteq U$.
Then $\mathcal{B}(M|X) = Min(\{B \subseteq U: B \bigcap T \neq \emptyset~(\forall T \in Com(\mathcal{H}(M))\})$, where $Com(\mathcal{H}(M))\}) = \{T \neq \emptyset, X - T \in \mathcal{H}(M))\}\}$.
\end{theorem}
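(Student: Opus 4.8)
The plan is to obtain this statement as an immediate consequence of three results already in place, in exactly the same spirit as the proof of Theorem~\ref{T:anexpressionofbasesbycloasedsets}. The three ingredients are: Theorem~\ref{T:therelationbetweenreductsandbses}, which identifies $\mathcal{B}(M|X)$ with the reduct family $RED_{\Theta_{M}}(X)$; Proposition~\ref{P:thedensefamilyofthedependencespaceinducedbymatroid}, which guarantees that the hyperplane family $\mathcal{H}(M)$ is dense in the dependence space $(U, \Theta_{M})$; and Lemma~\ref{L:acharacterizationforreductbydensefamily}, which expresses the reducts of any subset relative to an arbitrary dense family as a collection of minimal transversals. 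No fresh combinatorial argument about matroids should be required, since all the matroid-specific content has already been packaged into these statements.

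Concretely, I would first rewrite the left-hand side using Theorem~\ref{T:therelationbetweenreductsandbses}, replacing $\mathcal{B}(M|X)$ by $RED_{\Theta_{M}}(X)$. Next, I would check that the hypothesis of Lemma~\ref{L:acharacterizationforreductbydensefamily} is satisfied: by Proposition~\ref{P:thedensefamilyofthedependencespaceinducedbymatroid} the family $\mathcal{H}(M)$ is dense in $(U,\Theta_{M})$, so the lemma applies with $\Gamma = \mathcal{H}(M)$ and $\Theta = \Theta_{M}$. Substituting these choices into the lemma yields
\[
RED_{\Theta_{M}}(X) = Min(\{B \subseteq U: B \bigcap T \neq \emptyset~(\forall T \in Com(\mathcal{H}(M)))\}),
\]
with $Com(\mathcal{H}(M)) = \{T \neq \emptyset : X - T \in \mathcal{H}(M)\}$, which is exactly the right-hand side of the claimed identity. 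Chaining the two equalities then closes the argument.

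The proof therefore carries essentially no obstacle of its own; the only points demanding care are bookkeeping ones. I would make sure that the ambient set $X$ occurring inside $Com(\mathcal{H}(M))$ is consistently the same $X$ fixed in the statement, since the definition of $Com$ in Lemma~\ref{L:acharacterizationforreductbydensefamily} depends on it, and I would confirm that the notion of density being invoked matches the one defined just above Lemma~\ref{L:acharacterizationforreductbydensefamily} through the relation $\Gamma(\mathcal{H})$. The genuine mathematical effort was front-loaded into Proposition~\ref{P:thedensefamilyofthedependencespaceinducedbymatroid}, whose proof in turn rests on the hyperplane characterization of closure containment in Proposition~\ref{P:ancharacterizesofclosedsets}; once density is granted, this final theorem reduces to a one-line corollary.
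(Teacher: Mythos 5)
Your proposal is correct and follows essentially the same route as the paper, whose proof is exactly the one-line combination of Proposition~\ref{P:thedensefamilyofthedependencespaceinducedbymatroid} with Lemma~\ref{L:acharacterizationforreductbydensefamily}. In fact your write-up is slightly more careful than the paper's, since you explicitly invoke Theorem~\ref{T:therelationbetweenreductsandbses} to pass from $\mathcal{B}(M|X)$ to $RED_{\Theta_{M}}(X)$, a step the paper leaves implicit.
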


\begin{proof}
The result is obtained by combining Proposition \ref{P:thedensefamilyofthedependencespaceinducedbymatroid} with Lemma \ref{L:acharacterizationforreductbydensefamily}.
\end{proof}

Finally, we show an example to conclude this section.

\begin{example}
Suppose the matroid $M$ is given in Example \ref{E:example1} and $X = \{1, 3\}$.
By Example \ref{E:example1}, $cl_{M}(\{1, 3\}) = cl_{M}(\{1\}) = cl_{M}(\{3\}) = \{1, 3\}$.
On one hand, $Min(\{Y \subseteq X: cl_{M}(Y) = cl_{M}(X)\}) = \{\{1\}, \{3\}\}$.
On the other hand, since $\mathcal{H}(M) = \{\{2\}, \{1, 3\}\}$, $Com(\mathcal{H}(M)) = \{T \neq \emptyset, X - T \in \mathcal{H}(M)\} = \{\{1, 3\}\}$.
Thus $Min(\{T \subseteq U: B \bigcap T \neq \emptyset~(\forall T \in Com(\mathcal{H}(M))\}) = Min(\{\{1\}, \{3\}, \{1, 3\}\}) = \{\{1\}, \{3\}\}$.
Therefore $\mathcal{B}(M|X) = Min(\{Y \subseteq X: cl_{M}(Y) = cl_{M}(X)\}) = Min(\{B \subseteq U: B \bigcap T \neq \emptyset~(\forall T \in Com(\mathcal{H}(M))\})$.
\end{example}

\section{Conclusions}
\label{S:conclusions}
In this paper we proposed a dependence space in the context of matroids.
Then the dependence space were studied by matroids, and vice versa.
Based on the results of this paper, we intend to design efficient algorithms to solve attribute reduction issues in information systems and will investigate these issues using a variety of other theories such as geometric lattices.

\section{Acknowledgments}
This work is supported in part by the National Natural Science Foundation of China under Grant Nos. 61170128, 61379049, and 61379089, the Natural Science Foundation of Fujian Province, China, under Grant No. 2012J01294, the Science and Technology Key Project of Fujian Province, China, under Grant No. 2012H0043, and the Zhangzhou Research Fund under Grant No. Z2011001.



\end{document}